%% file: main.tex
\pdfoutput=1
\PassOptionsToPackage{hyphens}{url}
\documentclass[11pt]{arvix}
\usepackage[authoryear]{natbib}

\input{math_commands.tex}

\usepackage{microtype}
\usepackage{graphicx}
\usepackage{subcaption}
\usepackage{wrapfig}
\usepackage{float}
\usepackage{flafter}
\usepackage{placeins}
\newfloat{algorithm}{tbp}{loa}
\floatname{algorithm}{Algorithm}
\usepackage{booktabs}
\usepackage{array}
\usepackage{enumitem}
\newcommand{\compacttable}{\renewcommand{\arraystretch}{0.92}%
  \setlength{\aboverulesep}{0.35ex}\setlength{\belowrulesep}{0.45ex}}
\usepackage{amsthm}
\newtheorem{theorem}{Theorem}
\renewcommand{\eqref}[1]{\textup{(\ref{#1})}}
\usepackage{multirow}
\PassOptionsToPackage{hyphens}{url}
\usepackage{hyperref}
\usepackage{url}
\usepackage{xcolor}
\input{styles/prompt_boxes.tex}

\title{From Token-Max to Outcome-Max: How You Use AI Determines Its Productivity}
\author[1]{Chen Xu}
\author[2]{Mengqiao Liu\textsuperscript{*}}
\author[1]{Beibei Li}
\author[1]{Chenyan Xiong}
\affil[1]{Carnegie Mellon University}
\affil[2]{Amazon}
\correspondingauthor{\texttt{\{chenxu2,beibeili,cx\}@andrew.cmu.edu}, \texttt{mengqiao.liu7@gmail.com}}
\paperurl={https://github.com/chenxu2-gif/TokenEcom}
\hypersetup{pdfauthor={Chen Xu, Mengqiao Liu, Beibei Li and Chenyan Xiong},
  pdftitle={From Token-Max to Outcome-Max: How You Use AI Determines Its Productivity}}

\newcommand{\MethodName}{OutcomeShare}

\begin{document}

\begin{abstract}
Generative artificial intelligence (AI) models can perform increasingly complex tasks, yet greater AI usage does not necessarily translate into proportional productivity gains. We identify token-max as one source of this inefficiency: when token consumption is treated as productive effort, agents are encouraged to over-exert and expend computation beyond what is necessary. We instead propose \emph{outcome-max}, which rewards independently verified task completion per unit cost and induces a principled stopping rule.
Then, to study these objectives, we develop a three-level simulation framework spanning immediate interaction, long-run behavioral adaptation, and organizational collaboration. Across all three levels, outcome-max improves the efficiency of AI-assisted production while largely preserving verified task performance. To further align these incentives with outcome-max, we introduce \emph{\MethodName}, an incentive mechanism. Theory and simulation show that \MethodName{} can induce participation while generating shared gains for employees, firms, and LLM providers. Together, our results suggest that AI productivity not just depends on model capability, but also on the objectives governing AI use.
\end{abstract}
\maketitle
\begingroup
\renewcommand{\thefootnote}{\fnsymbol{footnote}}
\footnotetext[1]{This work is unrelated to Mengqiao Liu's position at Amazon.}
\endgroup

\input{sections/01_introduction}
\input{sections/02_related}
\input{sections/03_setup}
\input{sections/04_mechanism}
\input{sections/05_experiments}
\input{sections/07_conclusion}
\label{end:body}
\clearpage

\input{sections/08_statements}

\label{end:maintext}

\bibliography{refs}
\bibliographystyle{plainnat}

\clearpage
\appendix
\raggedbottom
\renewcommand{\topfraction}{0.92}
\renewcommand{\bottomfraction}{0.85}
\renewcommand{\textfraction}{0.08}
\renewcommand{\floatpagefraction}{0.75}
\setcounter{topnumber}{3}
\setcounter{bottomnumber}{3}
\setcounter{totalnumber}{5}
\makeatletter
\setlength{\@fptop}{0pt}
\setlength{\@fpsep}{12pt}
\setlength{\@fpbot}{0pt plus 1fil}
\makeatother
\renewcommand{\thefigure}{S\arabic{figure}}
\renewcommand{\thetable}{S\arabic{table}}
\renewcommand{\theHfigure}{supp.\arabic{figure}}
\renewcommand{\theHtable}{supp.\arabic{table}}
\setcounter{figure}{0}
\setcounter{table}{0}

\input{appendix/A_protocols}
\FloatBarrier
\input{appendix/B_theory}

\FloatBarrier
\input{appendix/C_results}

\input{appendix/D_prompts}

\end{document}

%% file: math_commands.tex
\usepackage{amsmath,amsfonts,bm}

\def\eqref#1{equation~\ref{#1}}

\def\1{\bm{1}}

\DeclareMathAlphabet{\mathsfit}{\encodingdefault}{\sfdefault}{m}{sl}
\SetMathAlphabet{\mathsfit}{bold}{\encodingdefault}{\sfdefault}{bx}{n}



%% file: styles/prompt_boxes.tex
\usepackage{framed}
\definecolor{PromptBorder}{HTML}{B9C8D6}
\definecolor{PromptBackground}{HTML}{F5F8FA}
\definecolor{PromptTitle}{HTML}{24445D}
\newenvironment{promptbox}[1]{%
  \setlength{\OuterFrameSep}{7pt}%
  \MakeFramed{\advance\hsize-\width\FrameRestore}%
  \noindent{\sffamily\bfseries\fontsize{10}{12}\selectfont
    \color{PromptTitle}#1}\par\nobreak\vspace{5pt}\nobreak
  \fontsize{10}{12}\selectfont\sffamily\raggedright
  \setlength{\parindent}{0pt}\setlength{\parskip}{3pt}%
  \emergencystretch=2em\relax
}{\par\endMakeFramed}

%% file: sections/01_introduction.tex
\section{Introduction}\label{sec:intro}

\begin{wrapfigure}{r}{2.1in}
\centering
\vspace{-8pt}
\includegraphics[width=\linewidth]{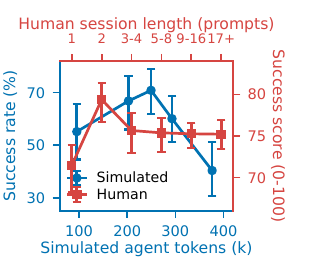}
\caption{\textbf{More tokens need not improve success.} Blue: simulated token-max on SWE-bench. Red: real
human coding sessions from SWE-chat.}
\label{fig:motivation}
\end{wrapfigure}

Generative artificial intelligence (AI) can improve productivity in software development~\citep{cui2025genai} and professional writing~\citep{noyzhang2023}. Yet realizing these gains requires balancing greater adoption with its costs. New reports of token-usage leaderboards~\citep{nyt_tokenmaxxing2026} and Meta's plans to limit employee token spending~\citep{meta_usage2026} reveal a tension between encouraging use and controlling costs: \textit{why can greater AI usage fail to improve productivity?}

We argue that one source of this inefficiency is the \emph{token-max} policy, which treats greater token consumption as evidence of productive effort. Yet more tokens do not necessarily produce better outcomes. Figure~\ref{fig:motivation} illustrates this pattern in both our simulations and real developer sessions: success initially increases but then plateaus or declines as interactions grow longer. The reason is due to \emph{over-efforting} effect: excessive interaction expands the reasoning context, encourages unnecessary and redundant checking and revision.


These observations motivate an alternative objective. Inspired by economic theory for rewarding inputs versus rewarding outcomes~\cite{lazear2000performance}, we propose \emph{outcome-max}, which rewards independently verified task completion per unit cost. Under this objective, further computation is justified when its expected marginal return exceeds the return achieved so far.

To evaluate the effectiveness of our proposed objective, we develop a three-level simulation framework spanning single-task execution, textual-memory adaptation, and employee selection. We simulate token-max and outcome-max behavior using LLM-based agents explicitly instructed to either maximize token
use or achieve correct results at minimum cost. The results show that the outcome-max policy substantially reduces token costs while even slightly improving overall verified task performance.

To further incentivize outcome-max behavior, we design \emph{\MethodName}, a mechanism that combines verified-outcome payments with liability for defective delivery. Intuitively, it shares the value of verified outcomes to reward employees for avoiding wasteful effort, while imposing penalties for
defective delivery to discourage them from misrepresenting unsuccessful work.
Through theory and scenario accounts, we examine
such a mechanism can align provider, employee, and firm incentives while
creating shared gains. This motivates us not only to develop more capable AI models, but also to design mechanisms that guide how people use them effectively.

We make three contributions as follows. Code is available at
\url{https://github.com/chenxu2-gif/TokenEcom}.
\begin{itemize}[leftmargin=*]
\item We identify the inefficiencies caused by rewarding token consumption and propose \emph{outcome-max}, an alternative policy that optimizes verified production per unit cost.
\item We develop a three-level simulation framework and demonstrate that outcome-max substantially reduces token costs while even slightly improving overall verified task performance.
\item We introduce \MethodName, an incentive mechanism that aligns organizational stakeholders around the adoption of outcome-max.
\end{itemize}

%% file: sections/02_related.tex
\section{Related Work}\label{sec:related}

In this section, we aim to connect agent computation, economic incentives,
and behavioral simulation as foundations for productive AI use.

\paragraph{Agent computation and adaptation.}
ReAct combines reasoning and actions~\citep{yao2023react}; MetaGPT organizes collaboration~\citep{hong2024metagpt}.
Executable benchmarks assess correctness~\citep{jimenez2024,liu2023evalplus}.
Scaling and cost-aware evaluation study accuracy--compute tradeoffs~\citep{snell2024,kapoor2024}.
Overthinking can waste computation and impair agent
performance~\citep{chen2024overthinking,cuadron2025overthinking}.
BAGEN studies budget awareness and early stopping~\citep{lin2026bagen}.
Reflexion adapts verbal memory~\citep{shinn2023}; ACE evolves reusable
playbooks~\citep{zhang2026ace}.
GEPA evolves prompts using execution feedback~\citep{agrawal2026gepa}.
We evaluate these decisions under specified supervision policies.

\paragraph{Incentives and productivity.}
Principal--agent theory links measurement to effort~\citep{holmstrom1979,holmstrom1991}.
Imperfect performance measures can distort incentives~\citep{baker1992},
including in multi-agent systems~\citep{rauba2026principal}.
Recent work studies contracts for model and effort choice~\citep{sreenivas2026contracting}
and usage versus success-based pricing~\citep{lobel2026pricing}.
We combine outcome payments, defect liability, and three-party participation.
Workplace studies find productivity gains~\citep{brynjolfsson2025,cui2025genai}.
Effects vary across settings~\citep{vaccaro2024}; early-2025 tools can also slow
developers~\citep{becker2025productivity}.

\paragraph{Behavioral simulation.}
Social simulation includes generative agents and Concordia~\citep{park2023,vezhnevets2023concordia}.
CAMEL and SOTOPIA study role-based interaction~\citep{li2023camel,zhou2024sotopia}.
\citet{gao2024abm} survey agent-based simulation.
Simulated-subject studies compare human responses~\citep{aher2023turing,argyle2023out}.
Economic simulations study decisions~\citep{horton2023homo} and market
institutions, including liability~\citep{erlei2026markets}.
We evaluate costs and verified quality conditional on specified behavior;
incentive-induced behavior requires a separate test.

%% file: sections/03_setup.tex
\section{Framework: From Token-Max to Outcome-Max}
\label{sec:setup}

In this section, we aim to formalize AI supervision around interaction
volume or verified output per cost, then simulate the corresponding supervisory behaviors across
execution, adaptation, and selection (Figure~\ref{fig:framework}).

\subsection{Notation and observable outcomes}
We distinguish externally verified success $y$, employee acceptance $a$,
and agent token use $T$. Success and acceptance are binary; a shipped defect
is $d=a(1-y)$. The external evaluator scores the final artifact even when
the employee does not accept it. Employees are persona-conditioned LLMs,
not human participants (Appendices~\ref{app:setup} and~\ref{app:validity}).

Providers meter tokens in four categories
$r\in\{\mathrm{in},\mathrm{out},\mathrm{cr},\mathrm{cw}\}$: \emph{input}
tokens the agent reads (task text, files, tool results), \emph{output} tokens
it writes (reasoning, commands, patches), \emph{cache reads} (input served
from a prompt cache at a discount), and \emph{cache writes} (tokens newly
stored to that cache). With $T^r$ the token count in category $r$ and $p_r$
its price per token, compute cost and the reported token quantity $T$ are
\begin{equation}
C^{\mathrm{AI}}=\sum_r p_rT^r,\qquad T=T^{\mathrm{in}}+T^{\mathrm{out}}.
\label{eq:cost}
\end{equation}
The cost includes all four categories; the reported token metric counts
uncached input and output. Supervision cost uses an assumed wage and time
proxy, so dollar payoffs depend on the scenario in Table~\ref{tab:pnl}.
These quantities separate verified output, resource use, and financial cost.

\subsection{Token-max: optimize the resource metric}
Let $\pi$ denote the interaction policy under the common resource limits.
Token-max rewards expected token consumption over the task distribution:
\begin{equation}
J_{\mathrm{tok}}(\pi)=\mathbb E_{\pi}[T].
\label{eq:token-objective}
\end{equation}
Additional tokens improve this objective even without a better verified
result. This objective property does not predict behavior. Our simulation
implements token-max through requests for continued work; token billing
motivates the archetype without establishing that billing alone produces it.

\subsection{Outcome-max: optimize verified output per cost}
Outcome-max rewards verified task completion per unit of compute cost:
\begin{equation}
J_{\mathrm{out}}(\pi)=\frac{\mathbb E_{\pi}[y]}{\mathbb E_{\pi}[C^{\mathrm{AI}}]},
\label{eq:outcome-objective}
\end{equation}
where expected cost is positive. Additional computation is worthwhile only
when its expected resolution gain justifies its cost
(Appendix~\ref{app:theory}). The ratio uses workload totals, not the mean
of task-level efficiencies. Success and cost together determine whether more computation is productive.

\subsection{Simulation framework}
\label{sec:simulation-framework}
\begin{wrapfigure}{r}{3.0in}
\centering
\vspace{-14pt}
\includegraphics[width=\linewidth]{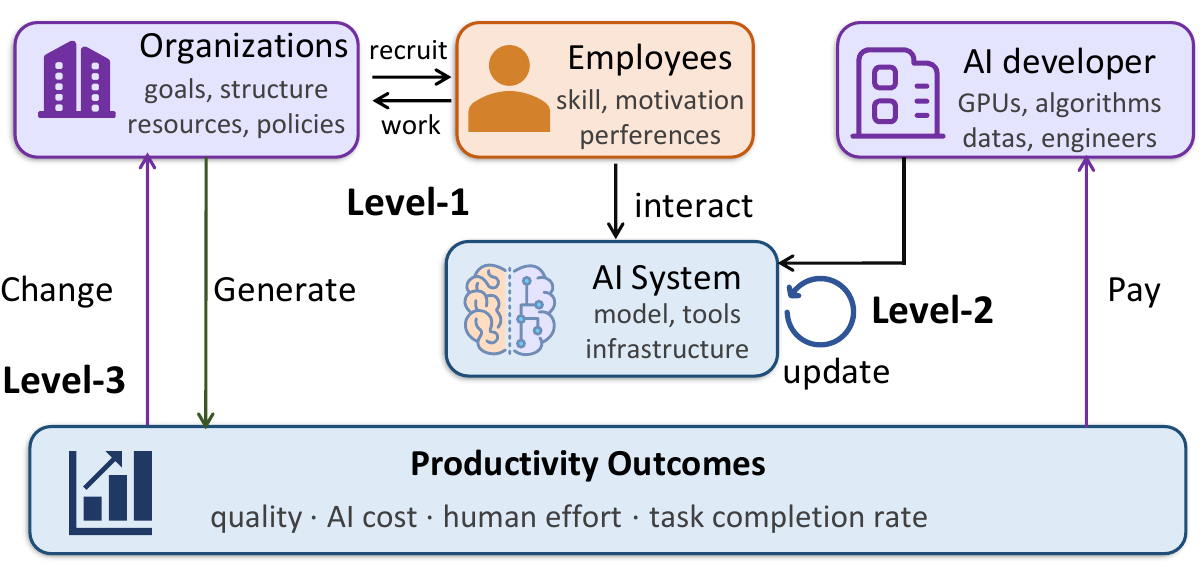}
\caption{\textbf{Simulation framework.} L1: execution; L2: adaptation;
L3: selection.}
\label{fig:framework}
\vspace{-10pt}
\end{wrapfigure}
We simulate token-max and outcome-max as two prescribed supervisory
behaviors (Figure~\ref{fig:framework}). Instructions specify how the employee
continues or stops; the task, models, evaluator, and resource limits are
held fixed within each paired comparison.
Employee personas vary supervisory behavior; neither the employee nor the
production agent sees the external gold outcome during an interaction.

\noindent\textbf{L1: execution.} Employee $i$ with objective $o$ supervises
the agent on task $x_j$: after each exchange $t$ the employee accepts
($a_t=1$) or asks for another step, until acceptance or a resource limit,
$\tau=\min\{t\le\tau_{\max}:a_t=1\}$. External evaluation checks the final
artifact, $y_{ij}=E(x_j,H_{ij})$. L1 evaluates policy consequences without
learning or workforce turnover.

\noindent\textbf{L2: adaptation.} Employee and agent retain separate textual
tactic libraries $L_g$ and $K_g$. After each training generation $g$,
reflection sees only outcome feedback $F_g=\{(y,a,d,\tau,T)_{ij}\}$ and updates
$L_{g+1}=R_H(L_g,F_g;o)$ and $K_{g+1}=R_A(K_g,F_g)$, where employee reflection $R_H$ revises tactics within the specified
behavioral regime and agent reflection $R_A$ targets verified resolution
at lower cost. Feedback
excludes task text, artifacts, and transcripts. Ablations freeze either
library; joint adaptation updates both. Model weights stay fixed, and frozen
libraries are tested on held-out tasks (Appendix~\ref{app:methods}).

\noindent\textbf{L3: selection.} Each period, the firm ranks employees by
average token use or verified solves per compute cost, with a bounded
preference for smaller successful patches. It retains top-ranked employees
and recruits their mutated descendants (Appendix~\ref{app:setup}). This tests
workforce composition and performance under the imposed retention rule.
Together, L1--L3 evaluate policy consequences under specified update rules.

%% file: sections/04_mechanism.tex
\section{Mechanism: \MethodName}
\label{sec:pricing}

In this section, we aim to specify conditions for aligning provider,
employee, and firm incentives through \MethodName: payments for verified work, liability for defects, and
shared value.

\subsection{From an objective to an implementable contract}
The intuition behind \MethodName{} is to share the value of verified outcomes
among the provider, employee, and firm, thereby rewarding productive work
rather than unnecessary token consumption. At the same time, liability for
defective delivery discourages employees from approving or misrepresenting
unsuccessful work as successful. The outcome-sharing and liability terms are
fixed before production, and participation constraints ensure that each
stakeholder benefits under the specified valuation and cost assumptions.

\subsection{Mechanism \MethodName}
Let $V$ be value per verified output, $Q(\phi)$ weekly verified
output under fee share $\phi\in[0,1]$, $p>0$ the effective audit probability,
$F$ the fine charged to an employee who approves a defect, and $\kappa V>0$
the damage per defective delivery. The three payment quantities are
\begin{equation}
R_{\mathrm{dev}}(\phi)=\phi V Q(\phi),\qquad
\lambda=\frac{pF}{\kappa V},\qquad pF=\kappa V\ \ (\lambda=1).
\label{eq:contract}
\end{equation}
Provider revenue is before compute costs; an employee bonus shares verified
value. Normalized liability $\lambda$ distinguishes no liability ($\lambda=0$)
from damage-matched liability ($\lambda=1$), our alignment target; experiments
also test other levels. Defects incur penalties without outcome payments.

\paragraph{Participation and fee cap.}
To preserve participation, compare each payoff $U_k(\phi,pF)$ with baseline
$U_k^0$ under common valuation and cost assumptions:
\begin{equation}
U_k(\phi,\kappa V)\ge U_k^0,\qquad
k\in\{\mathrm{provider},\mathrm{firm},\mathrm{employee}\}.
\label{eq:feasible-fee}
\end{equation}
The fee cap is the highest share satisfying all three constraints;
fees only divide existing surplus.

\subsection{Theoretical analysis}
The results below characterize objective properties and verification losses
(Appendix~\ref{app:theory}), not LLM optimization or contract adoption.

\begin{theorem}[Objective response to extra computation]
\label{prop:continuation}
Let $\pi,\pi'$ be feasible policies on the same task distribution with
positive expected cost. If $\pi'$ increases expected tokens and cost while
leaving positive expected verified output unchanged, then
$J_{\mathrm{tok}}(\pi')>J_{\mathrm{tok}}(\pi)$ and
$J_{\mathrm{out}}(\pi')<J_{\mathrm{out}}(\pi)$.
More generally, for baseline output $Y$, cost $C>0$, and changes
$\Delta Y$ and $\Delta C>0$, outcome-max improves if and only if
$\Delta Y/\Delta C>Y/C$.
\end{theorem}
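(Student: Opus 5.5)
The argument is a direct computation from the definitions \eqref{eq:token-objective} and \eqref{eq:outcome-objective}; no earlier result is needed beyond those formulas. Write $Y=\mathbb E_{\pi}[y]$, $C=\mathbb E_{\pi}[C^{\mathrm{AI}}]$, $Y'=\mathbb E_{\pi'}[y]$ and $C'=\mathbb E_{\pi'}[C^{\mathrm{AI}}]$, and set $\Delta Y=Y'-Y$ and $\Delta C=C'-C$. I will prove the general ratio criterion first and then read off the specific claim as a special case.

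\textbf{General criterion.} Since $C>0$ and $\Delta C>0$, we have $C'=C+\Delta C>0$. The inequality $(Y+\Delta Y)/(C+\Delta C)>Y/C$ is therefore equivalent, after multiplying through by the positive quantity $C(C+\Delta C)$, to
\[
C(Y+\Delta Y)>Y(C+\Delta C),
\]
that is, to $C\,\Delta Y>Y\,\Delta C$. Dividing by $C\,\Delta C>0$ gives $\Delta Y/\Delta C>Y/C$. Every step is reversible, so this is an ``if and only if''. The same computation with the inequalities reversed shows that $\Delta Y/\Delta C<Y/C$ holds if and only if $J_{\mathrm{out}}$ strictly decreases.

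\textbf{Specific claim.} The token part is immediate. By \eqref{eq:token-objective}, $J_{\mathrm{tok}}(\pi')-J_{\mathrm{tok}}(\pi)=\mathbb E_{\pi'}[T]-\mathbb E_{\pi}[T]$, and this is strictly positive by hypothesis. For the outcome part, verified output is unchanged, so $\Delta Y=0$; cost increases strictly, so $\Delta C>0$; and output is positive, so $Y>0$. Hence $\Delta Y/\Delta C=0<Y/C$, and by the reversed form of the criterion $J_{\mathrm{out}}(\pi')<J_{\mathrm{out}}(\pi)$. One can also see this directly: $Y/(C+\Delta C)<Y/C$ whenever $Y>0$.

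\textbf{Main obstacle.} The calculation is routine; the only real work is keeping the hypotheses straight. Dividing by cost and by $\Delta C$ requires $C>0$ and $\Delta C>0$, which are assumed. Strictness in the specific claim requires $Y>0$; if $Y=0$, both ratios equal zero and the conclusion fails, which is why the statement assumes \emph{positive} expected verified output. I will also note that the objective uses the ratio of workload totals rather than the mean of per-task ratios. This matters because the argument relies on linearity only through these aggregate expectations.
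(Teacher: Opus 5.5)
Your proposal is correct and follows essentially the same direct computation as the paper. The paper subtracts $Y/C$ from $(Y+\Delta Y)/(C+\Delta C)$ to obtain $(C\Delta Y-Y\Delta C)/(C(C+\Delta C))$ and reads off the sign, which is equivalent to your cross-multiplication by $C(C+\Delta C)>0$; your extra division by $C\,\Delta C>0$ to reach the stated ratio form $\Delta Y/\Delta C>Y/C$ is a step the paper leaves implicit, and your remark that strictness needs $Y>0$ matches the paper's $Y=0$ caveat.
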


Matching expected defect costs aligns private and social verification effort.

\begin{theorem}[Pigouvian verification alignment]
\label{prop:pigou}
Let verification effort $e\in\mathcal E$ cost $c(e)$ and induce shipped-defect
probability $\delta(e)$, with external damage $\kappa V$ per defect.
Assume a risk-neutral employee, a fixed effective audit probability $p>0$,
a collectible fine $F$, and all other private and social payoff terms constant
in $e$. If $\mathcal E$ is nonempty and both losses attain their minima, then
$pF=\kappa V$ implies that private loss $c(e)+pF\delta(e)$ and social loss
$c(e)+\kappa V\delta(e)$ have identical minimizers.
\end{theorem}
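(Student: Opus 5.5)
The plan is to substitute the hypothesis into the private loss and observe that it becomes the social loss pointwise. Concretely, write $L_{\mathrm{priv}}(e)=c(e)+pF\,\delta(e)$ and $L_{\mathrm{soc}}(e)=c(e)+\kappa V\,\delta(e)$ on the common domain $\mathcal E$. Under $pF=\kappa V$ these are the same function of $e$, so their sets of minimizers coincide.

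First, I will justify that these two expressions are the right objects to compare. By risk neutrality, the employee's private objective is expected payoff. A shipped defect is audited with probability $p$ and then fined $F$, with $F$ collectible by assumption. The expected fine is therefore $pF\,\delta(e)$, using independence of the audit from the defect event, which is what ``fixed effective audit probability'' encodes. Every other private payoff term is constant in $e$ by hypothesis, including outcome payments, which are never paid on defects. So the employee's optimization over $e$ is equivalent to minimizing $L_{\mathrm{priv}}$. The same argument applied to the social planner gives $L_{\mathrm{soc}}$: damage $\kappa V$ per defect, plus effort cost, with all other terms constant. The fine is a transfer, so it does not enter $L_{\mathrm{soc}}$.

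Second, I will note that adding a constant to an objective does not change its $\arg\min$. With $pF=\kappa V$ we get $L_{\mathrm{priv}}\equiv L_{\mathrm{soc}}$ on $\mathcal E$, hence $\arg\min L_{\mathrm{priv}}=\arg\min L_{\mathrm{soc}}$. The assumptions that $\mathcal E$ is nonempty and that the minima are attained guarantee these sets are nonempty, so the statement is not vacuous. No convexity or continuity of $c$ or $\delta$ is needed.

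The only real obstacle is the modeling step rather than the algebra: making precise that ``all other payoff terms constant in $e$'' lets us drop those terms without changing minimizers. It also requires the expected-fine expression, meaning that the audit event is independent of $e$ conditional on a defect. I will state this explicitly as the reading of a fixed effective $p$. As an optional remark, I could also note the converse: if $pF\neq\kappa V$ and $\delta$ varies over $\mathcal E$, the minimizers can differ. This shows why damage-matched liability ($\lambda=1$ in \eqref{eq:contract}) is the alignment target.
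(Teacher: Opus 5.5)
Your proposal is correct and matches the paper's proof: substituting $pF=\kappa V$ makes $c(e)+pF\delta(e)$ and $c(e)+\kappa V\delta(e)$ the same function on $\mathcal E$, so their minimizer sets coincide, and the fine is excluded from the social loss because it is a transfer. Your added modeling justification and your wording that mismatched coefficients only \emph{can} yield different minimizers agree with the paper's remark that damage-matching is sufficient but not necessary.
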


\paragraph{Implications.} Theorem~\ref{prop:continuation} favors continuation
when its marginal verified gain per dollar exceeds the current average.
Hard-task losses therefore motivate better verification signals, rather than
uniformly shorter work. Theorem~\ref{prop:pigou} aligns verification by matching
expected liability to defect damage.

\subsection{From efficiency to shared gains}
Efficient production creates surplus; fees and bonuses divide it. Shared gains
require value sufficient to cover production, supervision, verification, and
defects while satisfying participation constraints (Eq.~\eqref{eq:feasible-fee}).
Thus efficiency creates the opportunity, while quality control and allocation
determine whether participants benefit.

%% file: sections/05_experiments.tex
\section{Experiments}
\label{sec:experiments}

In this section, we aim to evaluate the consequences of token-max and
outcome-max supervisory behaviors across L1--L3. RQ1--RQ3 compare their performance
and sensitivity; RQ4--RQ5 evaluate delivery rules and scenario payoffs.

\subsection{Experimental setting}
\label{sec:experimental-setting}

\paragraph{Tasks and external evaluation.}
We evaluate externally verifiable work across software engineering,
program synthesis, and mathematics. SWE-bench Verified requires repairing
repository issues, with correctness determined by containerized gold
tests~\citep{jimenez2024,openai_swebench_verified2024}. HumanEval+ evaluates
generated programs with extended tests~\citep{liu2023evalplus}. GSM8K and
MATH-500 evaluate extracted final answers through normalized
comparison~\citep{cobbe2021,lightman2023}.
SWE-bench supports execution, adaptation, and workforce-selection experiments;
the other benchmarks test whether the execution-level contrast extends beyond
repository repair. Neither the employee nor the production agent sees the
gold evaluation outcome during an interaction. An employee's decision to
accept therefore remains distinct from whether the submitted artifact is
correct.

Difficulty groups are defined from benchmark metadata or reference-solution
structure, rather than from the observed success of either objective.
Software repair uses estimated human repair time, competition mathematics
uses benchmark difficulty labels, and the remaining tasks use
reference-solution complexity proxies. These proxies describe task structure
and need not rank model success monotonically. The mathematics stratification
also relies on reconstructed task ordering. Appendix~\ref{app:difficulty}
provides the grouping rules and their limitations.

\paragraph{Models, personas, and the objective contrast.}
The production agent performs the task, while a separate employee model
requests work, reviews the response, and decides whether to continue.
Employee personas vary how clearly they describe the task and how actively
they scrutinize the result. Profiles cover broad guidance and checking styles
as well as individual behaviors such as asking for examples, clarifying goals,
and requesting revisions. The behavioral dimensions and their prompt encoding
are given in Table~\ref{tab:personas} and Appendix~\ref{app:prompts}.

Within each paired comparison, the task, persona, model assignment,
evaluator, and resource limits are held fixed. The main treatment changes
the supervisory instructions: token-max encourages continued interaction,
whereas outcome-max emphasizes a correct result with economical resource use.
This compares specified policy packages; it does not establish that a payment
objective alone causes those behaviors to emerge. Objective-only prompts
provide a separate, narrower check of that distinction. Model assignments
also differ between the static, adaptation, and cross-task cohorts, so
absolute success differences across these cohorts do not isolate learning
or model capability.

\paragraph{Execution, adaptation, and selection protocols.}
The execution experiment crosses a common task roster with fixed employee
profiles under both objectives. Interaction ends when the employee accepts
or a shared resource limit is reached; final artifacts are evaluated
externally regardless of acceptance. This separates the consequences of
supervision within a task from changes in policy across tasks.

The adaptation experiment adds reusable employee and agent playbooks.
After training, reflection receives outcome summaries and the current
playbook, but not task text, artifacts, or transcripts. It updates the
employee playbook, the agent playbook, or both, depending on the ablation.
Employee reflection retains the assigned supervisory objective, while agent
reflection targets successful production with lower resource use.
Model weights remain fixed. Updated playbooks are frozen before evaluation
on disjoint held-out tasks, and held-out outcomes are never passed back to
reflection. Thus the comparison tests transfer of learned routines rather
than reuse of test-task solutions; Algorithm~\ref{alg:bilateral} gives the
update procedure.

The selection experiment instead maintains a workforce with diverse starting
profiles. Each period, employees supervise tasks from a common backlog,
receive a retention score, and update their tactics. The firm replaces
lower-ranked employees with mutated descendants of higher-ranked employees.
Token-based retention rewards interaction volume; outcome-based retention
rewards verified work per compute cost, with a bounded preference for smaller
successful patches. Recruitment preserves workforce size and records
lineage, making it possible to relate changes in supervisory style to
production and firm accounts. This is selection under an imposed rule,
rather than employees voluntarily choosing a new objective.
Appendix~\ref{app:setup} retains task counts, model configurations,
interaction limits, and numerical settings for each protocol.

\paragraph{Metrics.}

For $N$ attempted tasks, $y_j$ is external correctness and $T_j$ is the
production agent's input-plus-output tokens. We report three complementary
metrics, with efficiency measured in verified solutions per million agent tokens:
\begin{equation}
\mathrm{Success}=100\frac{\sum_j y_j}{N},\qquad
\overline T=\frac{\sum_j T_j}{N},\qquad
\eta=10^6\frac{\sum_j y_j}{\sum_j T_j}.
\label{eq:eta}
\end{equation}
External success is distinct from employee acceptance.
\textit{Efficiency improvement} is $100(\eta_{\mathrm{out}}/\eta_{\mathrm{tok}}-1)$.
This resource metric is distinct from monetary cost, which depends on model
pricing and supervision.
\emph{Economic outcomes} comprise provider revenue, employee income, and firm
net profit; Appendix~\ref{app:methods} defines their scenario accounting.

\paragraph{Matched comparisons and uncertainty.}
The main performance table summarizes valid-run cohorts. Supporting analyses
restrict comparisons to shared task--persona--repetition cells, so differences
in which attempts have usable evaluations do not determine the comparison.
Task-cluster bootstrap intervals keep repeated attempts on the same task
together. They describe uncertainty over the observed task mix, conditional
on the learned policies and settings, rather than variability across new
learning histories or independently simulated firms. Numerical boldface
indicates a better point estimate, not statistical significance.

\paragraph{Human interaction comparisons.}
WildChat coding conversations provide a check of interaction-depth coverage
(Appendix~\ref{app:validity}); SWE-chat provides observational comparisons of
session length, user personas, and rated success~\citep{zhao2024wildchat,baumann2026swechat}.
These datasets are not assigned token-max or outcome-max incentives, and
human session ratings are not executable correctness tests. Their role is
to contextualize the simulated behaviors, rather than to validate a causal
claim about workplace productivity.

\subsection{Comparing token-max and outcome-max}
\label{sec:objective-experiments}

\subsubsection{RQ1: Does outcome-max outperform token-max?}
\label{sec:rq1}
\label{sec:overeffort}
\label{sec:overeffort-cross}
\label{sec:entrench}

In this section, we aim to compare verified work per agent token under
the two specified supervisory behaviors across the three levels
(Table~\ref{tab:rq1}). Difficulty groups are defined independently of observed
success; paired intervals appear in Appendix~\ref{app:paired-uncertainty}.

\input{tables/tab_rq1.tex}

At L1, outcome-max reduces token use across task families while aggregate
success remains broadly comparable or improves. The resource-use gap is
therefore more consistent than the quality gap. Hard software and
program-synthesis tasks show the important exception: stopping earlier can
remove useful revision as well as redundant work. The aggregate efficiency
advantage should consequently be read together with difficulty-specific
success, rather than as evidence that shorter interaction is always preferable.

At L2, the efficiency advantage persists whether the employee, agent, or
both adapt. Bilateral adaptation improves success across difficulty groups,
whereas agent-only adaptation trades some success for lower resource use.
This distinction is consistent with the different roles of production and
supervision: a more economical production routine does not itself ensure
that the employee recognizes when further checking is needed. Matched
comparisons retain the efficiency pattern, while making the agent-only
success ranking less certain. The results support adapting supervision
alongside production when quality matters, without establishing a universal
advantage across task distributions.

At L3, outcome-based selection preserves more verified value while lowering
production and supervision costs, reversing the firm's average net sign in
the stated accounting scenario. Token-based selection can instead retain
styles that generate extensive activity without comparable verified output.
The organizational consequence is that an interaction-volume metric affects
not only the current task, but also which supervisory practices persist
through recruitment and retention.

The neutral reference uses zero behavior propensities, no incentive directive,
and no employee follow-up; it is unpaired with the adaptation arms and should
not be interpreted as an isolated estimate of the effect of adaptation.
Table~\ref{tab:matched-l2} provides the matched adaptation comparison, while
the stopping diagnostics below examine why efficiency gains can coexist
with hard-task failures.

\subsubsection{RQ2: Is the objective gap robust?}
\label{sec:robustness}
\label{sec:rq2}
In this section, we aim to assess policy sensitivity across backbones,
personas, and prompt designs (Figure~\ref{fig:robustness}). Positive
success gaps and efficiency gains favor outcome-max.

\begin{figure}[!htb]
\centering
\includegraphics[width=0.95\linewidth]{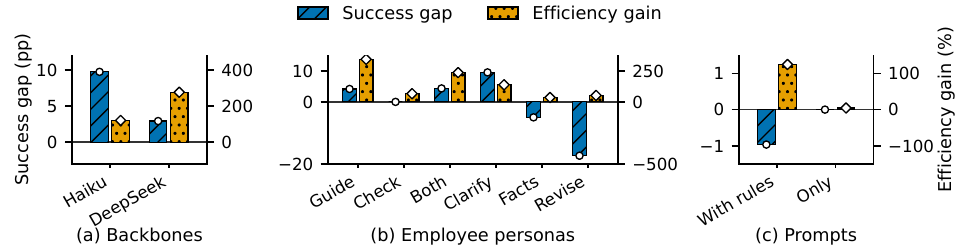}
\caption{\textbf{Token-max versus outcome-max across policy implementations.}
Blue: success gap in percentage points (left axis); gold: efficiency gain in \% (right axis).
``With rules'' states both the objective and continuation instructions;
``Only'' states the objective alone. Positive gaps favor outcome-max.}
\label{fig:robustness}
\end{figure}

Figure~\ref{fig:robustness}(a,b) shows higher outcome-max efficiency across
both backbones and all six personas, but success varies. Revise favors
iterative correction, which may benefit from token-max's continuation rules;
the uncertain success gap prevents a causal interpretation. Panel (c) shows
smaller gaps when prompts state only the objective.

Objective-only prompts show a small directional efficiency gain and unchanged
mean success, but the task-cluster interval spans both gains and losses.
This condition therefore suggests a possible role for objective framing
without establishing a reliable improvement. Appendix~\ref{app:paired-uncertainty}
reports the intervals; model and persona differences also prevent attributing
the cross-setting gap solely to wording. The main comparison supports a
clearer conclusion about explicit continuation and stopping policies than
about the behavioral effect of changing an objective alone.

\subsubsection{RQ3: Why is outcome-max more efficient?}
\label{sec:rq3}
\label{sec:overeffort-mech}

In this section, we aim to locate token costs in repeated actions, learned
rules, and selection (Figure~\ref{fig:rq2-three}). Difficult tasks may require
longer interactions, so duration alone does not identify the benefit of extra effort.

\begin{figure}[!htb]
\centering
\begin{subfigure}[b]{0.25\linewidth}
\centering
\includegraphics[width=\linewidth]{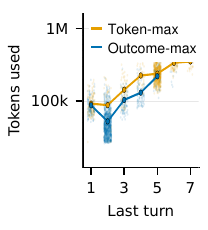}
\caption{Conversation cost}
\end{subfigure}\hfill
\begin{subfigure}[b]{0.25\linewidth}
\centering
\includegraphics[width=\linewidth]{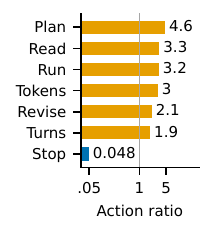}
\caption{Repeated actions}
\end{subfigure}\hfill
\begin{subfigure}[b]{0.48\linewidth}
\centering
\includegraphics[width=\linewidth]{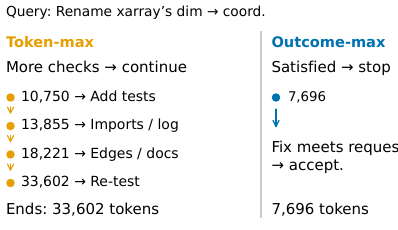}
\caption{Stopping case}
\end{subfigure}
\caption{\textbf{Where the efficiency gap arises.}
(a) Mean agent tokens by the turn at which conversations end; (b) token-max/outcome-max
ratio of action counts (above~1: token-max repeats more); (c) a paired xarray
attempt: token-max keeps requesting checks after a passing patch; outcome-max accepts.}
\label{fig:rq2-three}
\label{fig:stopping-diagnostics}
\label{fig:rq2-tactics}
\end{figure}

\begin{samepage}
Token-max conversations extend to later turns and involve more planning,
inspection, command execution, and revision. Outcome-max ends sooner when
the employee judges the delivered work sufficient. Matched attempts also
show more token-max continuation after the final correct patch is available
(Appendix~\ref{app:checkpoint-continuation}). This audit locates the earliest
saved checkpoint identical to the final correct patch and measures subsequent
interaction. It is conservative: an earlier, different patch might already
have been correct. The pattern nevertheless shows that some of the extra
review occurs after a successful artifact is in place, rather than solely
because token-max encounters more difficult work. Some later verification
may still be useful, so post-correctness tokens are not all removable waste.
\end{samepage}

The complementary failure pattern is premature acceptance. On hard tasks,
outcome-max sometimes stops with an incorrect patch, while further token-max
revision can still produce a correct result. Resource limits also constrain
some attempts, but the small budget probe does not show that raising the
limit alone repairs this problem (Table~\ref{tab:hard-stopping}). Taken
together, the audits favor a correctness-sensitive stopping policy: remove
unproductive continuation while preserving verification that can still
change the result.

\input{tables/tab_stopping_lessons.tex}

\paragraph{What experience does self-evolution preserve?}
We examine whether adaptation preserves different supervisory priorities
and relate them to human interactions (Figure~\ref{fig:learned-topics}).
Panel (a) shows persistent emphasis on context, targeted fixes, and stopping
under outcome-max, versus explanations, alternatives, and repeated review
under token-max. Objective-conditioned reflection preserves these priorities;
the libraries do not demonstrate spontaneous adoption of the objectives.

Panel (b) relates these priorities to SWE-chat user personas. Outcome-max
and vague requesters tend toward shorter interactions with higher success,
whereas token-max and mind changers tend toward longer interactions with
lower success. The measures differ across populations, so this alignment is
descriptive. Together, the panels suggest a broader implication for AI
literacy: evaluate targeted problem solving and necessary verification
instead of treating interaction volume as evidence of productive effort.
This does not identify workplace incentive effects.

\begin{figure}[!htb]
\centering
\begin{subfigure}[b]{0.62\linewidth}
\centering
\includegraphics[width=\linewidth]{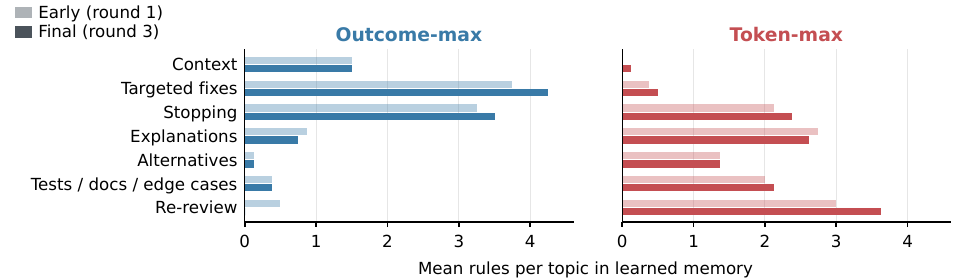}
\caption{Learned rules by topic}
\end{subfigure}\hfill
\begin{subfigure}[b]{0.34\linewidth}
\centering
\includegraphics[width=\linewidth]{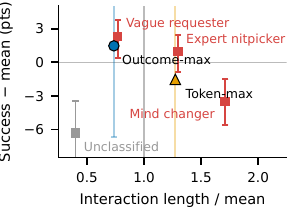}
\caption{Human personas vs.\ objectives}
\end{subfigure}
\caption{\textbf{Learned experience and interaction depth.}
(a) Employee playbook topics after early (light) and final (dark) learning;
outcome-max emphasizes targeted work and stopping. (b) SWE-chat personas
\citep{baumann2026swechat} and simulated objectives compared by normalized
interaction length and mean-centered success. Human ratings and simulated
pass rates are distinct measures; bars show 95\% cluster-bootstrap intervals.}
\label{fig:learned-topics}
\end{figure}

\paragraph{Which employees survive, and how does profit change?}
We examine how selection connects supervisory styles to firm performance
(Figure~\ref{fig:l3-dynamics}). Both firms replace the bottom three employees after
each period. Token-based selection expands mode-setting and clarification
lineages, while outcome-based selection preserves example-providing and
missing-context lineages. Revenues remain comparable, but token-max's rising
costs deepen losses while outcome-max stays profitable in the stated
accounts. These results are conditional on the imposed retention rules,
not voluntary policy adoption.

\begin{figure}[!htb]
\centering
\includegraphics[width=0.9\linewidth]{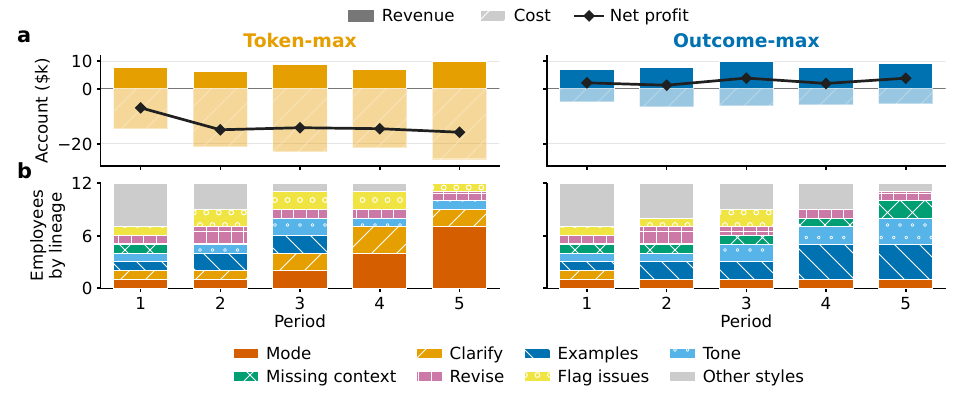}
\caption{\textbf{Workforce selection and firm performance over five periods.}
Left: token-max firm; right: outcome-max firm.
(a) Revenue (bars above zero), cost (bars below zero), and net profit (line)
under common scenario accounts.
(b) Twelve working employees per period, colored by ancestral profile.
Each period ends with three replacements from top-ranked lineages; expanding
segments include descendants.}
\label{fig:l3-dynamics}
\end{figure}

\subsection{Evaluating \MethodName}
\label{sec:mechanism-experiments}

\subsubsection{RQ4: How do heuristic contracts affect delivery and payoffs?}
\label{sec:rq4}\label{sec:experiment-winwin}
In this section, we aim to evaluate delivery and payoffs when \MethodName\
is applied to fixed production under specified inspection assumptions. Table~\ref{tab:payoffs}
compares token billing with outcome-based payments and damage-matched liability
under a common task-value and salary scenario.

\paragraph{Delivery and accounting design.}
We apply alternative submission rules to already generated artifacts, keeping
their correctness and production costs fixed. Weekly attempted work is limited
by both agent throughput and employee review capacity; rejecting an artifact
does not recover that capacity. A hypothetical inspection signal yields a
calibrated correctness probability. Under liability, the employee submits
only when the expected bonus outweighs the expected fine. Token-billing
baselines deliver all artifacts; alternative rules use confidence thresholds
or random screening. The signal is a stylized Gaussian observation model
with the observed arm-level success rate as its prior, not an empirically
trained verifier. Exact calibration and accounting assumptions remain in
Appendix~\ref{app:rq4-delivery}.

Provider fees and employee bonuses are shares of verified delivered value,
with employee salary paid separately. Defective deliveries incur losses to
the firm, while collectible fines transfer value from the employee to the
firm. Those transfers cancel in aggregate; defect losses do not. Provider
accounts measure revenue before serving costs. This separates producing
more correct work, controlling defective shipments, and distributing the
resulting value among participants.

\paragraph{Delivery tradeoffs and shared gains.}
Under token billing, outcome-max increases correct production but also ships
more defects, leaving the firm unprofitable. \MethodName\ makes delivery more
selective: it sacrifices some correct outputs while removing a much larger
share of defects. Relative to token-max, it delivers more correct work with
fewer defects and improves provider revenue, employee income, and firm profit.
The employee-heavy allocation gives the highest firm-profit estimate among
the tested \MethodName\ rules (Appendix~\ref{app:rq4-delivery}).

The employee-heavy allocation is selected from the tested contracts by its
firm-profit point estimate; it is not an independently validated optimum.
Its higher bonus changes the employee's willingness to deliver, so allocation
and delivery must be assessed jointly rather than treating sharing as a
transfer applied after a fixed submission decision.

Overall, \MethodName\ combines efficient production with defect control and
shared gains in this scenario. Its advantage depends on selective delivery
and sufficient task value, rather than maximizing shipment volume alone.
Valuation sensitivity shows that the same delivery behavior can leave the
firm profitable or loss-making under different task values. Conditional
task-cluster intervals also include firm losses at the main valuation.
The result illustrates a feasible scenario for shared gains, not robust
profitability or observed adoption of the complete contract.

\input{tables/tab_payoffs.tex}

\subsubsection{RQ5: How do acceptance, fees, and liability affect outcomes?}
\label{sec:rq5}

In this section, we aim to distinguish how acceptance, fees, and liability
affect delivery and payoffs (Figure~\ref{fig:rq4-three}). Delivery rates use
all attempted tasks as their denominator.

\begin{figure}[!htb]
\centering
\begin{subfigure}[t]{0.235\linewidth}
\centering
\includegraphics[width=\linewidth]{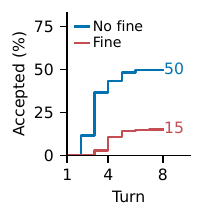}
\caption{Employee acceptance}
\end{subfigure}\hfill
\begin{subfigure}[t]{0.235\linewidth}
\centering
\includegraphics[width=\linewidth]{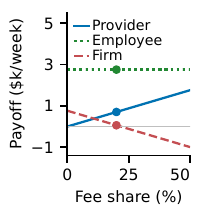}
\caption{Fee allocation}
\end{subfigure}\hfill
\begin{subfigure}[t]{0.235\linewidth}
\centering
\includegraphics[width=\linewidth]{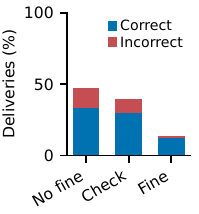}
\caption{Delivered outputs}
\end{subfigure}\hfill
\begin{subfigure}[t]{0.235\linewidth}
\centering
\includegraphics[width=\linewidth]{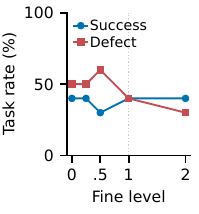}
\caption{Fine level $\lambda$}
\end{subfigure}
\caption{\textbf{Acceptance, allocation, and defect control.}
(a) Cumulative acceptance by turn with and without fine instructions;
(b) weekly payoffs at fixed production versus fee share $\phi$; (c) correct and
defective deliveries on matched policies; (d) success and defects versus fine level $\lambda$.}
\label{fig:rq4-three}
\end{figure}

\paragraph{Does liability change acceptance?}
Fine-conditioned policies show later, less frequent acceptance
(Figure~\ref{fig:rq4-three}a), consistent with more selective approval under
the combined liability and verification instructions.

\paragraph{Does the outcome fee affect shared gains?}
At fixed production, higher fees raise provider revenue and lower firm profit;
employee income is unchanged (Figure~\ref{fig:rq4-three}b). Shared gains require
satisfying all participation constraints (Eq.~\eqref{eq:feasible-fee}).

\paragraph{Does verification improve delivered quality?}
On matched attempts, checking and liability reduce both defective and correct
deliveries (Figure~\ref{fig:rq4-three}c; Appendix~\ref{app:liability-personas}).
This is a quality--quantity tradeoff: fewer defective shipments can come at
the cost of withholding correct work. The separate inspection-sensitivity
analysis makes the complementarity clear: more informative signals preserve
more correct deliveries while screening defects, improving firm returns
under the same allocation (Figure~\ref{fig:rq4-delivery-sensitivity}).
Reliable, affordable verification is therefore part of the mechanism's
practical requirement, not a consequence of token efficiency alone.

\paragraph{Does a larger fine improve outcomes?}
With learned policies fixed, fine and verification instructions yield
non-monotonic success and defect rates (Figure~\ref{fig:rq4-three}d).
The behavioral sweep changes the stated liability and verification
instructions; it does not execute monetary deductions. Its non-monotonic
responses therefore do not identify an optimal fine
(Appendix~\ref{app:fine-sweep}). This evidence is distinct from the
fixed-production contract calculation above: the former measures prompted
acceptance behavior, whereas the latter applies an explicit submission rule
and accounting model to fixed outputs.


%% file: tables/tab_rq1.tex
\begin{table}[!htb]
\centering
\caption{\textbf{Performance across three levels of human--AI simulation.}
Bold marks the better point estimate within each pair; $\dagger$ denotes reconstructed
math difficulty strata (Appendix~\ref{app:difficulty}).}
\label{tab:rq1}
\label{tab:rq1-l1}
\label{tab:rq1-l2}
\label{tab:rq1-l3}
\small
\compacttable
\setlength{\tabcolsep}{2pt}
\begin{tabular*}{\linewidth}{@{\extracolsep{\fill}}llrrrrrr@{}}
\toprule
 & & \multicolumn{4}{c}{Success (\%) $\uparrow$} & \shortstack{Tokens\\(k) $\downarrow$} & \shortstack{Efficiency\\gain (\%) $\uparrow$}\\
\cmidrule(lr){3-6}
Setting & Objective & Overall & Easy & Medium & Hard & & \\
\midrule
\multicolumn{8}{l}{\textbf{L1: single-round human–AI interaction}}\\
\multirow{2}{*}{SWE-bench} & Token-max & \textbf{48.8} & 62.2 & 43.9 & \textbf{39.8} & 21.1 & 0.0 \\
 & Outcome-max & 48.2 & \textbf{66.7} & \textbf{48.9} & 25.0 & \textbf{9.4} & \textbf{+120.7} \\
\cmidrule(lr){1-8}
\multirow{2}{*}{HumanEval+} & Token-max & 87.2 & 94.3 & 89.3 & \textbf{73.8} & 5.4 & 0.0 \\
 & Outcome-max & \textbf{87.5} & \textbf{95.9} & \textbf{91.4} & 69.6 & \textbf{4.7} & \textbf{+15.3} \\
\cmidrule(lr){1-8}
\multirow{2}{*}{GSM8K$^\dagger$} & Token-max & 89.8 & 93.9 & 86.5 & 88.5 & 2.4 & 0.0 \\
 & Outcome-max & \textbf{92.2} & \textbf{94.6} & \textbf{88.8} & \textbf{95.5} & \textbf{1.3} & \textbf{+86.5} \\
\cmidrule(lr){1-8}
\multirow{2}{*}{MATH-500$^\dagger$} & Token-max & 57.2 & \textbf{80.4} & \textbf{63.0} & 40.9 & 5.7 & 0.0 \\
 & Outcome-max & \textbf{57.4} & 80.1 & 61.5 & \textbf{42.1} & \textbf{4.6} & \textbf{+25.6} \\
\midrule
\multicolumn{8}{l}{\textbf{L2: multi-round human–AI interaction}}\\
\multirow{2}{*}{Employee adapts} & Token-max & 57.6 & 72.9 & \textbf{57.2} & 24.2 & 281.0 & 0.0 \\
 & Outcome-max & \textbf{60.9} & \textbf{80.3} & 56.7 & \textbf{28.3} & \textbf{78.9} & \textbf{+276.8} \\
\cmidrule(lr){1-8}
\multirow{2}{*}{Agent adapts} & Token-max & \textbf{62.4} & 78.3 & \textbf{59.8} & \textbf{32.8} & 206.8 & 0.0 \\
 & Outcome-max & 61.0 & \textbf{78.9} & 58.4 & 25.9 & \textbf{96.0} & \textbf{+110.8} \\
\cmidrule(lr){1-8}
\multirow{2}{*}{Both adapt} & Token-max & 55.8 & 72.4 & 54.3 & 21.1 & 262.7 & 0.0 \\
 & Outcome-max & \textbf{63.0} & \textbf{84.8} & \textbf{58.1} & \textbf{26.3} & \textbf{91.5} & \textbf{+224.2} \\
\bottomrule
\end{tabular*}
\par\smallskip
\begin{tabular*}{\linewidth}{@{\extracolsep{\fill}}lrrrr@{}}
\toprule
\multicolumn{5}{l}{\textbf{L3: organizational human--AI collaboration}}\\
Objective & \shortstack{Verified value\\(\$) $\uparrow$} & \shortstack{Total cost\\(\$) $\downarrow$} & \shortstack{Net profit\\(\$) $\uparrow$} & \shortstack{Normalized\\profit (\%) $\uparrow$}\\
\midrule
Token-max & 7,801.6 & 21,043.8 & -13,242.2 & -508.3\\
Outcome-max & \textbf{8,228.8} & \textbf{5,623.4} & \textbf{2,605.4} & \textbf{100.0}\\
\bottomrule
\end{tabular*}
\end{table}

%% file: tables/tab_stopping_lessons.tex
\begin{table}[!htb]
\centering
\caption{\textbf{Which follow-up work is wasteful, and which is useful?}
Two paired xarray attempts. The first-patch tests below are external;
employees cannot see them. Appendix~\ref{app:stopping-cases} gives the traces.}
\label{tab:stopping-lessons}
\small\compacttable
\renewcommand{\arraystretch}{1.08}
\setlength{\tabcolsep}{4pt}
\begin{tabular}{@{}>{\raggedright\arraybackslash}p{.13\linewidth}>{\raggedright\arraybackslash}p{.54\linewidth}>{\raggedright\arraybackslash}p{.29\linewidth}@{}}
\toprule
Objective & Employee behavior after the first patch & Verified result; total tokens\\
\midrule
\multicolumn{3}{@{}l}{\textbf{Case A: both agents' first patches already pass the task tests.}}\\
Token-max & Requests extra tests, import/changelog checks, then edge-case and documentation work. & Four passing patches; 33,602.\\
Outcome-max & Accepts the first patch and ends the interaction. & One passing patch; 7,696.\\
\multicolumn{3}{@{}p{\linewidth}@{}}{\textit{Why wasteful:} repeated checks and edits increase tokens without improving verified task correctness.}\\
\midrule
\multicolumn{3}{@{}l}{\textbf{Case B: both agents' first patches fail the task tests.}}\\
Token-max & Questions backward compatibility of existing \texttt{dim=} calls, prompting a repair. & The next patch passes; 30,901.\\
Outcome-max & Says the fix looks good and stops without raising compatibility. & The patch still fails; 4,456.\\
\multicolumn{3}{@{}p{\linewidth}@{}}{\textit{Why useful:} the additional compatibility check exposes a real defect; early acceptance misses it.}\\
\bottomrule
\end{tabular}
\end{table}

%% file: tables/tab_payoffs.tex
\begin{table}[!htbp]
\centering
\caption{\textbf{Delivery and weekly accounts.} \MethodName\ reduces defective shipments while retaining more correct deliveries and improving all three participant accounts relative to token-max. Selective delivery trades some correct-output volume for defect control; the design is described in Section~\ref{sec:rq4}, with calibration details in Appendix~\ref{app:rq4-delivery}.}
\label{tab:payoffs}
\small
\compacttable
\setlength{\tabcolsep}{3pt}
\begin{tabular*}{\linewidth}{@{\extracolsep{\fill}}lrrrrr@{}}
\toprule
 & \multicolumn{2}{c}{Deliveries / week} & \multicolumn{3}{c}{Weekly accounts (USD)}\\
\cmidrule(lr){2-3}\cmidrule(lr){4-6}
Production / contract & Correct $\uparrow$ & Wrong $\downarrow$ & Provider $\uparrow$ & Firm $\uparrow$ & Employee $\uparrow$\\
\midrule
Token-max / token billing & 9.8 & 15.6 & 2 & $-$5,429 & 2,400\\
Outcome-max / token billing & 38.3 & 61.3 & 3 & $-$14,746 & 2,400\\
Outcome-max / \MethodName & 19.4 & \textbf{2.9} & \textbf{535} & \textbf{275} & \textbf{3,443}\\
\bottomrule
\end{tabular*}
\end{table}

%% file: sections/07_conclusion.tex
\section{Conclusion}\label{sec:conclusion}

We simulated token-max and outcome-max behaviors across execution, memory
adaptation, and workforce selection. Outcome-max yields more verified work
per agent token under these settings, with quality costs on some
hard tasks. \MethodName\ identifies scenarios in which selective delivery and
value sharing benefit all three participants.

The main comparison specifies two behaviors rather than testing whether
objective optimization produces them. The small
directional gain under objective-only prompts suggests a possible role for
objective framing. Simulated participants and benchmark tasks
limit external validity, while financial gains depend on verification,
valuation, and collectible liability. Future experiments should separate
reward from behavior instructions and test real-user participation and
performance before drawing conclusions about workplace adoption.

%% file: sections/08_statements.tex
\section*{AI Use Statement}
AI tools were used to assist with manuscript drafting and language polishing, literature retrieval and related-work discovery, research ideation and execution, and the development and checking of mathematical arguments. These uses supported the research process, with the specific applications described in the paper. The authors retain full responsibility for the originality and accuracy of the manuscript, the validity of its mathematical claims, the reliability of its references, and the interpretation of its results. The simulated employee and production
agent roles, model configurations, and prompts are documented in Appendices~\ref{app:setup} and~\ref{app:prompts}.

\section*{Reproducibility Statement}
Appendix~\ref{app:setup} specifies tasks, models, simulation settings, and
accounting assumptions; Appendix~\ref{app:theory} gives proofs.
Appendix~\ref{app:additional} reports supporting analyses, and
Appendix~\ref{app:prompts} contains the prompts. The accompanying code and
recorded outputs support the reported tables and figures. Code is available at
\url{https://github.com/chenxu2-gif/TokenEcom}.

%% file: appendix/A_protocols.tex
\section{Experimental Setup and Evaluation Protocols}
\label{app:setup}

\begin{center}
\centering
\textbf{Appendix guide.} Each entry points to supporting evidence or implementation detail.\par\smallskip
\small\compacttable
\begin{tabular*}{\linewidth}{@{\extracolsep{\fill}}ll@{}}
\toprule
Question & Where to look \\
\midrule
Tasks, simulation levels, validity, and accounting & Appendix~\ref{app:setup} \\
Objective and verification incentives & Appendix~\ref{app:theory} \\
Robustness: cohorts, models, personas, budgets & Appendix~\ref{app:robustness} \\
Ablation: adaptation and verification components & Appendix~\ref{app:ablation} \\
Analysis: stopping, selection, accounts, sensitivity & Appendix~\ref{app:analysis} \\
Exact behavioral prompts & Appendix~\ref{app:prompts} \\
\bottomrule
\end{tabular*}
\end{center}

In this section, we aim to define comparable tasks, participant roles, and
resource accounts for the three simulation levels. A shared evaluation
protocol compares policy packages under common task and model settings.

\subsection{General setup}
We define a common interaction and evaluation protocol so that objective
comparisons use the same tasks, models, and resource limits.

\paragraph{Production interaction and evaluation.}
Let a task $x_j\in\mathcal{X}$ be assigned to employee $i$. A production agent
proposes actions or candidate outputs, and the employee provides feedback,
requests another production step, or stops; the completed trajectory $H_{ij}$
records the interaction. The trajectory is scored by a task-specific evaluator
$E:\mathcal{X}\times\mathcal{H}\rightarrow\{0,1\}$ external to both participants,
so $y_{ij}=E(x_j,H_{ij})$ is \emph{verified} rather than perceived completion.
The main comparison directly specifies token-max and outcome-max behaviors,
holding task, evaluator, models, and resource limits fixed within each pair.
It evaluates their consequences rather than whether optimizing an objective
produces the behaviors. The instruction sets are: \emph{token} (maximize interaction volume; hard to
satisfy), \emph{outcome} (a correct fix at minimum cost; stop once correct),
and, for the audit-fine mechanism
(Section~\ref{sec:pricing}), \emph{audit-fine} (accept
only after verification; shipped defects are audited and fined).

\paragraph{Tasks and evaluators.}
Repository software engineering uses SWE-bench Verified (500 human-validated
GitHub issues; correctness requires the gold tests to pass in an isolated
container~\citep{jimenez2024,openai_swebench_verified2024}). Program synthesis uses
HumanEval+ (164 problems, dataset-provided test programs~\citep{liu2023evalplus}).
Grade-school mathematics uses the first 250 tasks of the GSM8K test split~\citep{cobbe2021}.
Competition mathematics uses the first 250 tasks of MATH-500~\citep{hendrycks2021,lightman2023}.
Both mathematics benchmarks use boxed-answer extraction and normalized comparison.
Behavioral coverage is checked against WildChat-1M~\citep{zhao2024wildchat} coding
conversations (Appendix~\ref{app:validity}).

\paragraph{Difficulty groups.}
\label{app:difficulty}
Table~\ref{tab:rq1} fixes one grouping per benchmark before comparing objectives.
SWE-bench uses estimated human repair time: below 15 minutes (Easy), 15 minutes
to one hour (Medium), and at least one hour (Hard). The scheduled L1 grid contains
five tasks per group. L2 uses its separate
held-out task set. MATH uses official
levels 1--2, 3, and 4--5. HumanEval+ and GSM8K have no corresponding labels in
these records, so we use reference-solution complexity proxies. HumanEval+
counts Python AST statements in the canonical solution, excluding docstrings;
the groups are at most 4, 5--8, and at least 9 statements. GSM8K counts annotated
calculations in the reference answer; the groups are at most 2, 3--4, and at
least 5. These thresholds are the one-third and two-thirds empirical quantiles
of the full reference splits, with ties kept together. They are structural
proxies, not validated difficulty scales, and need not order model success
monotonically. All groups pool valid-run successes and counts directly.

Mathematics task identities are inferred from the task-major ordering of
the first 250 benchmark items and the four-persona cycle because per-attempt
IDs are unavailable. Difficulty results are conditional on this ordering;
HumanEval+ and SWE-bench instead use explicit task IDs.

\paragraph{Models and roles.}
The fixed-policy SWE-bench grid uses Claude Haiku 4.5 production agents and
Claude Sonnet 4.6 employees across 14 profiles, 15 tasks, and two repetitions.
The mathematical and program-synthesis runners specify DeepSeek v4 Pro agents
and v4 Flash employees. L2 separately uses DeepSeek v4 Pro agents, Haiku 4.5
employees, and Sonnet 4.6 reflection. These identities come from experiment
configurations; per-attempt model telemetry is unavailable.
The static L1 grid and L2 adaptation runs use different task cohorts and model
pairings, so comparing their absolute success rates does not isolate learning.
L3 is evaluated on its population-selection task pool. Its trajectories are
not included in the fixed-policy L1 row.

\paragraph{Employee personas.}
Table~\ref{tab:personas} defines the employee behaviors adapted from AI Fluency
\citep{dakan2025aifluency}; Appendix~\ref{app:prompts} gives their prompt encoding.

\begin{table}[!htb]
\centering
\caption{\textbf{Employee behavior dimensions.} The first six dimensions
control description; the remaining five control discernment. Low and high
archetypes use propensities of 0.1 and 0.9; single-behavior profiles elevate
one dimension.}
\label{tab:personas}
\small
\compacttable
\begin{tabular*}{\linewidth}{@{\extracolsep{\fill}}lll@{}}
\toprule
\textbf{Dimension} & Low propensity & High propensity\\
\midrule
\textbf{Iterates and refines} & Accepts initial output & Requests revisions\\
\textbf{Clarifies goals} & Leaves goals implicit & States goals and constraints\\
\textbf{Provides examples} & Gives no examples & Supplies examples\\
\textbf{Specifies format} & Leaves format open & Specifies output structure\\
\textbf{Sets interaction mode} & Uses no explicit role & Assigns a collaboration role\\
\textbf{Communicates tone} & Leaves style open & States style preferences\\
\midrule
\textbf{Identifies missing context} & Omits missing information & Supplies needed context\\
\textbf{Questions reasoning} & Accepts explanations & Challenges reasoning\\
\textbf{Fact-checks outputs} & Trusts claims & Checks correctness\\
\textbf{Flags issues} & Leaves errors unchallenged & Points out errors\\
\textbf{Requests alternatives} & Accepts one approach & Requests other approaches\\
\bottomrule
\end{tabular*}
\end{table}

\paragraph{Measurement.}
With verified outcome $y_{ij}\in\{0,1\}$, we price recorded model usage
according to Eq.~\eqref{eq:cost}. Main-table efficiency consistently counts
production-agent input and output tokens. Diagnostics that include employee tokens are labeled separately. L3 usage
contains input and output counts without separate cache measurements; its
accounts therefore use two-component input/output pricing. Human supervision cost is
$C^{\mathrm{human}}_{ij}=w\,h_{ij}$, with loaded wage $w$ and a scenario-based
supervision-time proxy $h_{ij}$ (anchored at 15~min per attempt, scaled by
persona-token burden); dollar levels are therefore scenario results, whereas
token use and verified correctness are directly measured. Over the valid runs
$\mathcal V$ we report the resolution rate $\bar y$, mean tokens $\bar T$, and
the outcome efficiency of \eqref{eq:eta} (resolved tasks per million
tokens). The two organizational key-performance indicators used for evaluation
and selection, $s^{\mathrm{tok}}_i$ and $s^{\mathrm{out}}_i$, are defined in
\eqref{eq:kpi}, where $\rho_i\in[0,1]$ is the mean parsimony of resolved patches (a
$0.9$--$1.1$ tie-breaker toward smaller patches). Cost-reduction ratios are
computed within each domain and never averaged across domains.

We study three levels of increasing organizational complexity, sharing the setup
above.

\subsection{Level-1 simulation: supervised production}
In this section, we aim to measure the consequences of specified supervision
policies within a task.
A single supervised interaction per task supports the token-versus-outcome
comparison and the over-efforting analysis (Section~\ref{sec:overeffort}). The
cross-task comparison spans all four task families; the balanced \emph{static
persona grid} crosses 14 profiles $\times$ 15 tasks $\times$ 2 repetitions per
objective (420 scheduled runs per arm before validity filtering), so every persona faces the same tasks in
both arms. A separate Haiku validation set evaluates frozen generation-5
learned policies on 100 difficulty-stratified SWE-bench tasks with two
repetitions; it is not an extension of the fixed-policy L1 grid
(Table~\ref{tab:backbone-objectives}). A persona-only model grid crosses three
agent models with two employee models over four profiles $\times$ 15 tasks. The two mathematics datasets each use 250
problems $\times$ 4 personas (2{,}000 runs across both datasets) and HumanEval+ 164 $\times$ 4 (656
runs), with at most 10 employee exchanges; the software runs allow at most 8
exchanges and 50 internal agent turns per exchange.

\subsection{Level-2 simulation: human--agent co-evolution}
In this section, we aim to test whether the policy gap persists as
participants learn reusable policies. Both sides adapt across generations for the skill-learning results
(Section~\ref{sec:entrench}). Curation uses 20 training and 15 disjoint held-out
SWE-bench tasks; the protocol runs for three generations under each objective
and in three regimes that vary who adapts (employee-only, agent-only, or
both), with per-profile runs over 11 starting profiles. The learning
\emph{protocol} itself is defined in Appendix~\ref{app:methods}
(Algorithm~\ref{alg:bilateral}); frozen generation-3 skills are evaluated on the
held-out set and, for the efficiency panel, on the 100-task validation set.

\subsection{Level-3 simulation: population selection (the firm)}
In this section, we aim to evaluate workforce changes under specified
retention scores and repeated selection. A firm selects its workforce on a leaderboard for the firm results
(Section~\ref{sec:entrench}). Each firm begins with $n=12$ employees (one
all-low baseline plus one per elevated behavior dimension), assigns 20 backlog
tasks per employee per period, and runs for 5 organizational periods (1{,}200
launched runs per arm). Each firm uses its assigned supervisory policy and retention score,
$s^{\mathrm{tok}}$ versus $s^{\mathrm{out}}$ of \eqref{eq:kpi}: at the end
of period $t$ the bottom $K=3$ employees are removed and replaced by mutated
descendants of the top $K$ (one randomly selected profile dimension flipped with probability $0.2$), so the workforce updates as in \eqref{eq:selection}. Period
operating profit uses difficulty-weighted ground-truth solves (so a firm cannot
profit from false acceptance) with the stated scenario of Table~\ref{tab:pnl}.

The two selection scores are
\begin{equation}
s_i^{\mathrm{tok}}=\frac{1}{n_i}\sum_j T_{ij},\qquad
s_i^{\mathrm{out}}=
\frac{\sum_j y_{ij}}{\sum_j C^{\mathrm{AI}}_{ij}}(\alpha+\beta \rho_i),
\label{eq:kpi}
\end{equation}
where $n_i$ counts evaluated tasks and $\rho_i$ is resolved-patch parsimony.
The fixed simulation uses $\alpha=0.9$, $\beta=0.2$, $n=12$, $K=3$,
and mutation probability $\mu=0.2$; these are simulation settings, not
parameters fitted to the observed objective gap.
Selection applies
\begin{equation}
\mathcal W_{t+1}=\mathrm{survivors}_{n-K}(\mathcal W_t;s)
\cup\mathrm{mutate}(\mathrm{top}_K(\mathcal W_t;s)).
\label{eq:selection}
\end{equation}
with $K=3$; each recruited profile has probability $0.2$ of one dimension flip.

\begin{table}[!htb]
\centering
\caption{\textbf{Population-accounting assumptions.} Monetary values define
the stated scenario, not measured workplace prices or returns.}
\label{tab:pnl}
\small
\compacttable
\begin{tabular*}{\linewidth}{@{\extracolsep{\fill}}ll@{}}
\toprule
\textbf{Parameter} & Scenario value\\
\midrule
\textbf{Annual backlog} & 2,400 tasks\\
\textbf{Task-value tiers} & \$16, \$40, \$200, \$640\\
\textbf{Loaded wage} & \$80 per hour\\
\textbf{Employee-time anchor} & 15 minutes per attempt\\
\textbf{Input, output, cache prices} & \$0.27, \$1.10, \$0.07 per million tokens\\
\textbf{Employees and periods} & 12 employees; five selection periods\\
\bottomrule
\end{tabular*}
\end{table}

\subsection{Learning and Economic Accounting}
\label{app:methods}

This subsection specifies the learning protocol used in
Section~\ref{sec:entrench} and the scenario accounting used to evaluate the
mechanism in Section~\ref{sec:pricing}.

\subsubsection{Metric-conditioned bilateral verbal reinforcement}

In this section, we aim to separate learning by the employee from learning
by the production agent. Each adapts a textual policy from shared outcome
feedback, allowing comparisons across adaptation roles.

\paragraph{Learning architecture.}
We extend verbal reinforcement---in which a language model improves a
persistent text policy from experience without weight
updates~\citep{shinn2023}---from a single learner to two functionally distinct
ones: an employee library $L_g$ of interaction tactics (how to instruct,
verify, and stop) and an agent library $K_g$ of production tactics (how to
interpret, construct, and stop). All model weights stay fixed across
generations.

\paragraph{Joint policy and outcome-only feedback.}
With objective $o\in\{\mathrm{tok},\mathrm{out}\}$, persona $b$, and textual
state $Z_g=(L_g,K_g)$, an interaction yields a trajectory
$H_{gj}\sim P(\cdot\mid x_j,b,o,L_g,K_g)$. After the external evaluator runs, a
deterministic compression $A$ keeps only the outcome vector
\begin{equation}
f_{gj}=A(H_{gj})=(y_{gj},a_{gj},d_{gj},\tau_{gj},T_{gj}),\qquad F_g=\{f_{gj}:j\in\mathcal{D}^{\mathrm{train}}_g\},
\label{eq:reflectionfeedback}
\end{equation}
where $a$ is the acceptance decision, $d=a(1-y)$ a shipped-defect indicator, and
$\tau$ the exchange count; the reflection model sees $F_g$ and the current
library, but no task text, output, or transcript.

\paragraph{Role-specific, metric-conditioned updates.}
Two operators transform the common feedback,
\begin{equation}
L_{g+1}=R_H(L_g,F_g;u_H^o),\qquad K_{g+1}=R_A(K_g,F_g;u_A),
\label{eq:bilateralupdate}
\end{equation}
each returning at most ten imperative tactics. The employee utility $u_H^o$ is
metric-conditioned (token reflection is asked to increase interaction volume,
outcome reflection to increase correct resolutions per token); $u_A$ asks the
agent to raise correct resolutions at fewer tokens. Both updates depend on the
same $F_g$, which was generated jointly by $L_g$ and $K_g$, so the two libraries
co-evolve through shared, externally verified feedback across a generational
barrier---distinct from multi-agent weight training. Three ablations (who
adapts: employee-only, agent-only, both) use the same rule. Frozen skills are
scored on the disjoint held-out set by
\begin{equation}
\eta^{\mathrm{test}}_{g,o}=\frac{\sum_{j\in\mathcal{D}^{\mathrm{test}}}y_{gj}}{\sum_{j\in\mathcal{D}^{\mathrm{test}}}T_{gj}/10^6};
\label{eq:heldoutlearning}
\end{equation}
because the reflection channel carries no task text, changes in
\eqref{eq:heldoutlearning} measure transfer of reusable routines, not replay.
Algorithm~\ref{alg:bilateral} summarizes the protocol.

\begin{algorithm}[t]
\caption{Metric-conditioned bilateral verbal reinforcement}
\label{alg:bilateral}
\small
\textbf{Input:} objective $o$; initial libraries $(L_0,K_0)$; fixed models;
training set $\mathcal D^{\mathrm{train}}$; held-out set
$\mathcal D^{\mathrm{test}}$; evaluator $E$; generations $G$.\\
\textbf{Output:} $\{(L_g,K_g)\}_{g=1}^{G}$ and $\{\eta^{\mathrm{test}}_{g,o}\}$.
\begin{enumerate}
\item For $g=0,\ldots,G-1$: execute $\mathcal D^{\mathrm{train}}$ under
$(L_g,K_g,o)$ to obtain $\{H_{gj}\}$.
\item Apply $E$ and $A$ to build $F_g$; drop invalid runs.
\item Update enabled libraries by \eqref{eq:bilateralupdate}; retain the
previous library if an update is empty.
\item Freeze $(L_{g+1},K_{g+1})$ and evaluate on $\mathcal D^{\mathrm{test}}$
without reflection by \eqref{eq:heldoutlearning}.
\item Pass only $(L_{g+1},K_{g+1})$ across the barrier; never pass held-out
outcomes to a reflection operator.
\end{enumerate}
\end{algorithm}

\subsubsection{Outcome-based pricing and the three-party model}

In this section, we aim to connect verified production with the payoffs
received by each participant. Throughout, the three parties are the LLM provider, firm, and employee.
The employee supervises production; the production agent is the LLM system.

\paragraph{Production and review capacity.}
Weekly production is limited by whichever resource runs out first: agent
computation or employee review time. If their capacities are $\mu_A$ and
$\mu_H$ tasks per week, attempted output is $M=\min(\mu_A,\mu_H)$ and verified
output is $Q=M\bar y$. Agent capacity divides its weekly output-token budget
by output tokens per task; review capacity divides available minutes by review
minutes per task. With different task values, weekly verified value is
$Z=M\,\overline{Vy}$: we value each successful task before averaging.

\paragraph{Allocating verified value.}
The outcome fee $\phi$ transfers a share of $Z$ to the provider, and the bonus
rate $B_E$ transfers another share to the employee. The firm retains what
remains after salary. Thus provider revenue is $\phi Z$, employee income is
salary plus $B_E Z$, and firm profit is $(1-\phi-B_E)Z$ minus salary.
These accounts hold observed production fixed as the fee changes. They do not
assume that a higher fee improves behavior. Provider revenue is reported before
its compute costs, so it does not measure provider profit.

\paragraph{Pigouvian audit-fine.}
A shipped defect is detected with effective probability $p$ (audit sampling
probability times audit sensitivity), and a fine $F$ is levied on its approver, with $p\cdot F=\kappa V$ so the expected fine
equals the social damage $\kappa V$ (rework cost per unit value). The equality internalizes expected defect damage in the reduced effort model
below; it does not by itself establish a particular fee optimum or a joint
improvement in all three parties' payoffs. The observed outcome and audit-fine
comparison is reported in Table~\ref{tab:ablation}b, separately from the
fixed-skill fine sweep in Table~\ref{tab:auditfine}.

In the behavioral experiments, liability is communicated through the simulated
employee's objective and, for learned policies, its reflection instructions.
The audit-fine prompt describes a large penalty for approving an incorrect
fix. The fixed-policy sweep instead states expected penalties of
$0,0.25,0.5,1,$ or $2$ times the value of a correct solve. These runs evaluate
acceptance and correctness; they do not separately sample audit events or
execute monetary deductions. Thus they test responses to stated liability,
while the contract equation specifies how an implemented audit and fine would
produce that expected liability. Identifying the sweep's multiplier with
$\lambda=pF/(\kappa V)$ uses the explicit normalization $\kappa=1$: defect damage
is valued at one correct solve. This calibration is an accounting assumption,
not an estimated damage level.

\paragraph{Three-party payoff gains.}
Weekly payoff gains apply the accounting of Table~\ref{tab:pnl} per
employee seat to the valid labor-market runs of Section~\ref{sec:entrench}
(token arm versus outcome arm, all generations). Weekly throughput is the tandem
queue above with an agent seat of 40~h/week at 50 output tokens/s and 30~h/week
of supervision, where a task's supervision minutes scale the 15-minute anchor by
its persona-token burden; the employee salary is
$30\,\mathrm{h}\times\$80=\$2{,}400$ per week. Under token pricing the LLM provider
bills tokens at the agent prices of Table~\ref{tab:pnl}, the employee earns
the salary, and the firm books the value of resolved fixes minus the bill and
the salary; under outcome pricing (fee $\phi=0.20$, employee bonus $10\%$ of
each verified fix) the LLM provider earns $\phi M\overline{Vy}$, the employee salary plus
bonus, and the firm the remainder. Confidence intervals jointly resample
the 50 shared tasks across arms (20{,}000 resamples, 95\% percentile intervals),
preserving within-task dependence across employees and periods. Calibration
and capacity parameters stay fixed; these intervals do not measure variation
across independently simulated firms.

The verification guarantee and its assumptions are proved in
Appendix~\ref{app:theory}; the accounting above does not assume that the
complete contract has been jointly deployed.

\subsection{Behavioral Coverage}
\label{app:validity}
In this section, we aim to assess measurement validity and behavioral
coverage through common exclusion rules, task-cluster uncertainty, and
comparisons with human conversations.

\paragraph{Behavioral diagnostics.}
Termination records distinguish employee acceptance from resource-limit or
agent termination. Stored follow-ups are classified by keyword rules into
requests for tests, alternatives, explanation, scope expansion, and questions,
plus an approve-and-stop category. Persona comparisons use the balanced grid;
tool and turn-cost summaries use their stated run cohorts. Supervision minutes
are a scenario proxy rather than measured human labor. These diagnostics are
descriptive mediators, not independently randomized interventions.

\paragraph{Coverage of human behavior.}
Prior studies assess how closely simulated populations reproduce human behavior
\citep{argyle2023out,aher2023turing}. We assess behavioral coverage by comparing
simulated interactions with WildChat-1M~\citep{zhao2024wildchat} coding conversations.
A high-recall lexical filter selects candidate
multi-turn conversations, followed by LLM coding validation. Follow-up style,
supervision mode, and perceived completion are annotated without the treatment
label using a fixed annotation seed. Figure~\ref{fig:alignment-depth} shows overlap in
interaction depth. Simulated personas are more decisive and less likely to
begin with underspecified requests or expand scope than observational users.
WildChat lacks executable correctness labels and randomized objectives; this
check assesses behavioral coverage, not workplace productivity effects.

\begin{figure}[!htb]
\centering
\begin{subfigure}[t]{0.32\linewidth}
\centering
\includegraphics[width=\linewidth]{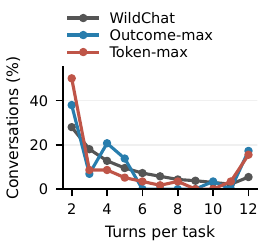}
\caption{Conditional user-turn distribution.}
\end{subfigure}\hspace{0.04\linewidth}
\begin{subfigure}[t]{0.32\linewidth}
\centering
\includegraphics[width=\linewidth]{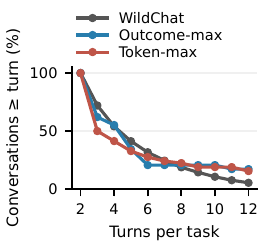}
\caption{Depth in the multi-turn tail.}
\end{subfigure}
\caption{\textbf{Interaction-depth coverage.} Both panels compare WildChat coding conversations with simulated interactions,
conditioned on multiple user turns. The tail view groups turns 2--12; it is not
an additional sample. Overlap supports behavioral coverage, not causal
productivity validity.}
\label{fig:alignment-depth}
\end{figure}

%% file: appendix/B_theory.tex
\section{Objective Properties and Verification Incentives}
\label{app:theory}

In this section, we aim to establish how objectives govern continuation
and how liability aligns verification (Theorems~\ref{prop:continuation}
and~\ref{prop:pigou}). The analysis concerns the objectives and mechanism in
Sections~\ref{sec:setup} and~\ref{sec:pricing}. Neither guarantee assumes that
prompted LLMs reach an optimal policy, and neither proves that outcome-max
improves resolution on every task distribution.

\subsection{Unproductive continuations}
In this section, we aim to characterize the response to additional computation,
first when verified output is unchanged and then when it improves. The ratio
of expected output to expected cost measures efficiency over the workload;
averaging per-task output-to-cost ratios would instead give more weight to
cheap tasks.
\begin{proof}[Proof of Theorem~\ref{prop:continuation}]
Consider two feasible policies $\pi,\pi'$ on the same task distribution.
Suppose $\pi'$ adds a continuation with no change in expected verified
resolution: $\mathbb E_{\pi'}[y]=\mathbb E_{\pi}[y]=Y>0$. If expected tokens
increase by $\Delta T>0$ and expected monetary compute cost increases by
$\Delta C>0$, with $C=\mathbb E_{\pi}[C^{\mathrm{AI}}]>0$, then
\begin{equation}
J_{\mathrm{tok}}(\pi')-J_{\mathrm{tok}}(\pi)=\Delta T>0,
\qquad
J_{\mathrm{out}}(\pi')-J_{\mathrm{out}}(\pi)
=-\frac{Y\Delta C}{C(C+\Delta C)}<0.
\label{eq:continuation-proof}
\end{equation}
Substitute the two policies' expected token counts in
Eq.~\eqref{eq:token-objective}. For Eq.~\eqref{eq:outcome-objective}, subtract
$Y/C$ from $Y/(C+\Delta C)$ and combine denominators. All factors in the
resulting denominator and numerator are positive.

If $Y=0$, both outcome-efficiency scores are zero, so the strict conclusion does
not apply. More generally, if the continuation changes expected resolution by
$\Delta Y$, subtracting $Y/C$ from $(Y+\Delta Y)/(C+\Delta C)$ gives
$(C\Delta Y-Y\Delta C)/(C(C+\Delta C))$. Since the denominator is positive,
\begin{equation}
J_{\mathrm{out}}(\pi')>J_{\mathrm{out}}(\pi)
\quad\Longleftrightarrow\quad
\Delta Y\, C>Y\,\Delta C.
\label{eq:marginal-continuation}
\end{equation}
\end{proof}

For positive $\Delta C$, this compares the marginal resolution gain per dollar
with the current average. It explains why outcome-max may favor additional
reasoning when useful; it does not prescribe universally shorter trajectories.
The theorem is a comparison of policies with stipulated expectations. The
experiments establish whether observed continuations have the relevant
properties, rather than assuming that all extra tokens are wasteful.

\subsection{Pigouvian verification alignment}
In this section, we aim to identify when an audit-fine internalizes expected
damage from shipped defects. The result concerns the verification decision
under the assumptions stated below.
\begin{proof}[Proof of Theorem~\ref{prop:pigou}]
Let $\mathcal E$ be a common nonempty feasible set of verification efforts and
suppose both minimization problems below attain their minima. Let $c(e)$ be the
effort cost and $\delta(e)$ the probability of a shipped defect. Assume:
(i) the employee is risk neutral; (ii) audits detect a shipped defect with
fixed effective probability $p>0$ and the fine $F$ is collectible;
(iii) each shipped defect causes external damage $\kappa V$; and
(iv) all remaining private and social payoff terms are constant in $e$.
Define
\begin{equation}
L_{\mathrm{private}}(e)=c(e)+pF\delta(e),\qquad
L_{\mathrm{social}}(e)=c(e)+\kappa V\delta(e).
\end{equation}
Then $pF=\kappa V$ implies
\begin{equation}
\arg\min_{e\in\mathcal E}L_{\mathrm{private}}(e)
=\arg\min_{e\in\mathcal E}L_{\mathrm{social}}(e).
\label{eq:pigou-alignment}
\end{equation}
Substitution makes the two loss functions equal at every feasible
$e$. Their ordering of all feasible efforts and hence their sets of minimizers
are identical. Fines are transfers, so they are not counted as additional
social damage.
\end{proof}

This calibration is sufficient, but need not be necessary: distinct coefficients
can share a boundary minimizer or give identical incentives when $\delta$ is
constant. With differentiable costs and an interior common optimum $e^*$ at
which $\delta'(e^*)\ne0$, subtracting the two first-order conditions does imply
$pF=\kappa V$. Existence, uniqueness, and interiority require additional assumptions.

\paragraph{Scope and implementation limitations.}
If effort changes a private bonus $b(e)$, the private objective becomes
$c(e)+pF\delta(e)-b(e)$. If effort also changes throughput or social production
value, the social objective changes as well. Eq.~\eqref{eq:contract} alone then
need not align the full objectives. Likewise, audit expense, risk aversion,
limited liability, and imperfectly measured damages alter implementation. The
empirical audit-fine intervention is an instruction to a simulated employee;
it is evidence about behavioral responses to that instruction, not a test of
monetary incentive compatibility with human participants. Fixed-production
fee accounts allocate observed value; the theorem does not establish that
those accounts yield gains for every participant.

%% file: appendix/C_results.tex
\section{Supplementary Experiments and Analysis}
\label{app:additional}
\label{app:setting-summary}
In this section, we aim to assess the scope and consequences of the token-max/outcome-max policy
comparison. We ask whether the
policy comparison is robust across settings (Section~\ref{app:robustness}),
which design components change behavior (Section~\ref{app:ablation}), and
what trajectories and accounts explain the observed outcomes
(Section~\ref{app:analysis}). Throughout, success
means external correctness, tokens count production-agent input plus output,
and efficiency $\eta$ means verified solutions per million agent tokens.
Acceptance and bad delivery retain the separate definitions from
Section~\ref{sec:setup}. Scenario accounts follow the main tables.

\subsection{Robustness: Where Does the Objective Contrast Hold?}
\label{app:robustness}
\label{app:production-robustness}
\label{app:personas}
In this section, we aim to test robustness across personas, model pairings,
interaction budgets, and repositories. These comparisons distinguish
consistent resource savings from setting-dependent effects on correctness.

\subsubsection{Task-level uncertainty and matched success}
\label{app:paired-uncertainty}
In this section, we aim to distinguish policy gaps from task-sampling
variation through matched comparisons.

Table~\ref{tab:paired-uncertainty} reports policy gaps on shared
task--persona--repetition cells, with 95\% task-cluster bootstrap intervals.
Resampling tasks keeps repeated attempts together. The primary comparison
contains 13 tasks; the hard-task and objective-only cohorts offer less
independent evidence. These intervals describe uncertainty within the
observed tasks, rather than variation across new learning histories.
Matching also explains small differences from the full-cohort main table.

Efficiency uses total successes divided by total agent tokens. Rare bootstrap
draws with no successes in either arm are conservatively bounded when forming
relative-gap intervals, rather than discarded. Boldface in the main table
marks numerical comparisons, not statistical significance. Overall,
task matching supports the efficiency advantage, while uncertain success
gaps and hard-task losses limit claims of uniform quality improvement.
\input{tables/tab_paired_uncertainty.tex}

\subsubsection{Objective gaps across backbones, personas, and prompts}
\label{app:robustness-gaps}
In this section, we aim to determine whether models, personas, and prompt
wording change the policy contrast. Comparisons are paired within each
setting; cross-setting differences do not isolate any single factor.

Figure~\ref{fig:robustness} and Table~\ref{tab:robustness-means} compare
policy packages within each setting. Panel (a) shows higher efficiency and
success under both model pairings. A stricter common-task comparison retains
the efficiency advantage but yields opposite success gaps across pairings.
Resource savings are therefore more stable than correctness gains.

Panel (b) shows the same distinction across supervision styles: all six
profiles save resources, but fact checking and revision lose success.
Revise's emphasis on iterative refinement may fit token-max's continuation
instructions better; the uncertain success gap supports this as a possible
explanation, not a general persona ranking.

Panel (c) shows a small positive efficiency estimate and unchanged success
when prompts state only the payoff. The matched efficiency gain is 4.5\%,
with a 95\% task-cluster interval of $[-31.9,62.4]$\%. This is a directional
signal, not a precisely established effect. Differences in models and
personas also prevent isolating the effect of prompt wording. Taken together, these comparisons
support more stable efficiency gains than correctness gains for explicit
policy packages. The objective-only condition does not establish that
changing rewards alone generates the instructed supervisory styles.
\input{tables/tab_robustness_means.tex}

\subsubsection{Model and persona scope}
\label{app:backbone-scope}
In this section, we aim to assess the range of models and supervisory
styles over which resource use changes. Persona and learned-policy
comparisons use distinct cohorts.

Tables~\ref{tab:model-pairings} and~\ref{tab:model-pairings-haiku} compare
four behavior profiles across six agent--employee configurations. Stronger
description and scrutiny substantially change token use, but neither yields
uniformly better success. These profiles carry no objective directive, so
they compare the consequences of specified supervision styles without
identifying responses to rewards.
Persona efficiency gains use all-low as the reference; objective comparisons
use token-max. Model assignments follow the experimental configurations; these
supplementary comparisons have less complete evaluation-integrity evidence.
\input{tables/tab_model_pairings_opus.tex}
\input{tables/tab_model_pairings.tex}

Table~\ref{tab:backbone-objectives} extends the objective comparison to
learned policies and a separate validation cohort. It again shows that
resource savings need not imply a uniform success improvement. Different
learning histories and task sets prevent interpreting these results as an
isolated backbone effect. Overall, supervisory style changes resource use
across backbones without establishing a universal success ranking.
\input{tables/tab_backbone_objectives.tex}

\input{tables/tab_personas_grid.tex}

\subsubsection{Hard-task evaluation and budget sensitivity}
In this section, we aim to test whether evaluation faults or interaction
limits explain the hard-task success gap.

\paragraph{Internal turn limits.}
We next examine whether stopping and turn limits constrain useful work.
The matched hard-task gap is concentrated in xarray and pylint and retains
its direction when any single task is removed. Trajectories suggest two
contributors: outcome-max sometimes accepts an incorrect patch, and some
token-max attempts become correct only after further revision. Outcome-max
also reaches the shared turn cap more frequently
(Table~\ref{tab:hard-stopping}). These observations explain why shorter
interactions can sacrifice useful work, without isolating a causal mechanism.

A small budget probe does not show that a higher cap resolves the problem:
all four patches fail, and the higher-cap runs finish without using the extra
budget. Incomplete termination records and unavailable test feedback limit
this diagnostic. Overall, the quality gap survives integrity checks, and
extra permitted turns alone do not resolve premature stopping.
\input{tables/tab_hard_stopping.tex}

\subsubsection{Matched adaptation outcomes}
\label{app:matched-l2}
In this section, we aim to test whether unequal task coverage explains
the adaptation results.

Table~\ref{tab:matched-l2} compares objectives on shared task--persona
cells. Outcome-max remains more efficient, while the agent-only success
interval includes zero. Restricting all adaptation regimes to common cells
retains the small success gap, so unequal coverage does not fully explain it.
The evidence supports a resource-use difference more clearly than a success
ranking.
\input{tables/tab_matched_l2.tex}

The repository-stratified software comparison also retains the direction of
the resource-use gap across the tested codebases. This is a scope check on
the observed repositories, not an estimate for unseen codebases. Together,
matching and repository stratification support a resource-use difference
more clearly than a success ranking.
\FloatBarrier

\subsection{Ablation: Which Components Change Behavior?}
\label{app:ablation}
In this section, we aim to distinguish the effects of adaptation from
verification. The former changes learned policies; the latter changes which
outputs employees approve. Separate comparisons assess their contributions.

\subsubsection{Adaptation roles and neutral baseline}
In this section, we aim to examine which participant must adapt to change
resource use and success.

Table~\ref{tab:ablation} brings together adaptation regimes, a
neutral reference, and verification variants. Adaptation preserves the
resource advantage across roles; verification instead changes which outputs
employees accept. Different valid-run cohorts make the neutral result
a reference point rather than an isolated estimate of supervision's benefit.
Thus adaptation supports resource savings across roles, while quality
depends on how employee and agent policies develop together.
The neutral control sets all eleven behavior propensities to zero and uses
no extra objective directive. Its single exchange still begins with an
employee-generated request: all 86 valid runs record employee tokens, so it
is not a no-employee-model condition. It resolves 49 tasks on the 100-task
validation roster; unmatched valid-run coverage prevents a paired causal
comparison with the adaptation arms.

\input{tables/tab_ablation.tex}

\begin{figure}[!htb]
\centering
\begin{subfigure}[t]{0.32\linewidth}
\centering\includegraphics[width=\linewidth]{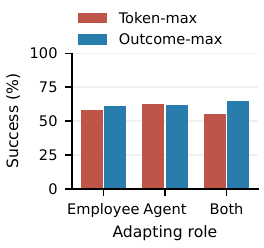}
\caption{Verified success.}
\end{subfigure}\hspace{0.04\linewidth}
\begin{subfigure}[t]{0.32\linewidth}
\centering\includegraphics[width=\linewidth]{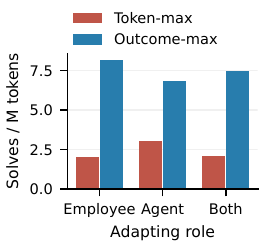}
\caption{Solves per million tokens.}
\end{subfigure}
\caption{\textbf{Adaptation-role comparison.} Each of the six
regime--objective arms is restricted to the same 327 task--persona cells,
so these matched estimates differ from the valid-run cohorts in
Table~\ref{tab:ablation}. Each panel compares objectives under a common adaptation role.}
\label{fig:supp-learning}
\end{figure}

\subsubsection{Verification liability and acceptance}
\label{app:liability-personas}
In this section, we aim to compare delivered quality under instructions
that combine liability with verification requirements.

Figure~\ref{fig:supp-verification} shows fewer defective deliveries under
fine-conditioned policies across the three supervision profiles. Interaction
length does not consistently decrease: the benefit comes from more selective
approval, rather than uniformly earlier stopping. Figure~\ref{fig:supp-verification}
shows the accompanying tradeoff---some correct outputs also remain unaccepted.
Both comparisons use shared task--persona cells within their respective
policy sets. Overall, the fine-conditioned policy packages reduce defects,
with some loss of correct deliveries rather than uniformly shorter work.
\begin{figure}[!htb]
\centering
\begin{subfigure}[t]{0.32\linewidth}
\centering\includegraphics[width=\linewidth]{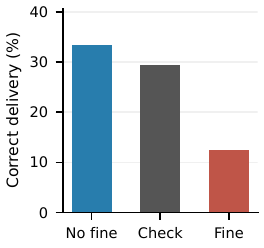}
\caption{Accepted correct outputs.}
\end{subfigure}\hspace{0.04\linewidth}
\begin{subfigure}[t]{0.32\linewidth}
\centering\includegraphics[width=\linewidth]{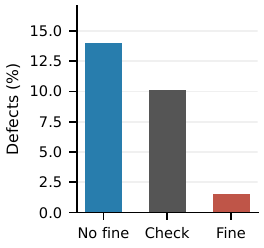}
\caption{Accepted incorrect outputs.}
\end{subfigure}
\caption{\textbf{Verification-instruction variants.} No fine, gate check,
and audit-fine use the same 129 matched task--persona cells. External
correctness is distinct from employee acceptance. These prompted components
were tested separately from the payment contract.}
\label{fig:supp-verification}
\end{figure}

\FloatBarrier

\subsection{Analysis: Why Do the Outcomes Differ?}
\label{app:analysis}
In this section, we aim to explain the objective gap through stopping,
learned behavior, selection, and economic allocation. Trajectories connect
resource use to supervision; scenario accounts connect delivered value to
participant payoffs.

\subsubsection{Stopping and learned behavior}
\label{app:diagnostics}
\label{app:learning-detail}
\label{app:agent-solo}
In this section, we aim to identify how stopping decisions and learned
rules contribute to the resource gap.

\paragraph{Trajectory diagnostics.}
We examine which activities accompany longer interactions.
Figure~\ref{fig:rq2-three} decomposes stopping, tool activity, and context
accumulation. Longer runs repeatedly process accumulated context; this is an
observed association across trajectories, not a randomized addition of turns.
Outcome-conditioned playbooks emphasize targeted fixes and stopping, while
token-conditioned playbooks emphasize explanation and continued review.
Together, the trajectory and playbook evidence links repeated review to
higher resource use, without identifying an independently causal mediator.

\paragraph{Continuing after the final correct patch.}
\label{app:checkpoint-continuation}
We test whether continued review occurs after a correct final artifact
is already available, using all matched primary SWE pairs. For each correct final patch, we identify the first saved checkpoint
with the same artifact and measure subsequent token use.
Table~\ref{tab:checkpoint-continuation} shows that token-max continues after
this point more often, with a task-cluster interval excluding zero.
This supports continued review as a source of additional cost. The measure
is conservative because earlier correct but different patches are excluded;
subsequent verification may still be useful, so later tokens are not all
removable waste.
\input{tables/tab_checkpoint_continuation.tex}

\begin{figure}[!htb]
\centering
\begin{subfigure}[t]{0.32\linewidth}
\centering\includegraphics[width=\linewidth]{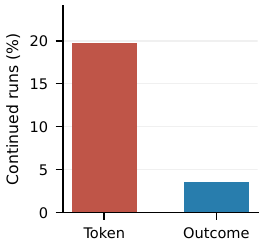}
\caption{Attempts continuing after the final correct patch.}
\end{subfigure}\hspace{0.04\linewidth}
\begin{subfigure}[t]{0.32\linewidth}
\centering\includegraphics[width=\linewidth]{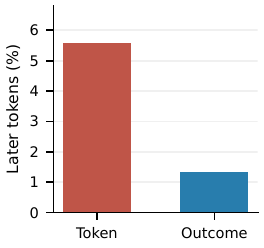}
\caption{Tokens after the checkpoint.}
\end{subfigure}
\caption{\textbf{Post-correct-patch continuation.} Both panels use the same
310 matched primary SWE pairs as Table~\ref{tab:checkpoint-continuation}.
The later-token share is descriptive, not an estimate of removable cost;
subsequent verification may still be useful.}
\label{fig:supp-stopping}
\end{figure}

\subsubsection{Paired examples of productive and premature stopping}
\label{app:stopping-cases}
In this section, we aim to distinguish productive stopping from premature
approval. The paired examples contrast extra checks after a correct patch
with a compatibility check that repairs an incorrect patch. Together, they
show that the value of continuation depends on the unresolved defect, not
on interaction length alone.
\input{tables/tab_stopping_cases.tex}
\input{tables/tab_stopping_case_notes.tex}

\paragraph{Examples of learned rules.}
We examine how learned instructions express the two objectives.
Table~\ref{tab:learned-rules} illustrates the topic contrast in
Figure~\ref{fig:learned-topics}: token-max broadens follow-up work, while
outcome-max targets fixes and stops sooner. These descriptive rules do not
establish each topic's causal effect on performance; they connect the
objective to the supervisory priorities learned from experience.
\input{tables/tab_learned_rules.tex}

\subsubsection{Population selection and three-party accounts}
\label{app:population-detail}
In this section, we aim to connect workforce selection with production
costs and the distribution of verified value.

\paragraph{Accounting scope.}
For population selection, Table~\ref{tab:firm} reports endpoint profit and
leaderboard informativeness. The accounts use the common valuation scenario in
Table~\ref{tab:pnl}; they separate verified value from production cost instead
of inferring quality from the selection score. The rank correlation relates
employee KPI rank to verified value per dollar; it is not token efficiency
$\eta$.
\input{tables/tab_firm.tex}

\paragraph{Which employees does selection retain?}
We test whether changing workforce composition accompanies the cost gap.
Figure~\ref{fig:selection-trajectories} links lower cost to a different
workforce composition. Context/example lineages expand under outcome-max
and disappear under token-max, while token-max's production costs rise.
These labels track ancestral profile families rather than measuring
post-mutation behavior. The trajectories describe one firm per objective;
Figure~\ref{fig:l3-dynamics} links these lineage changes to firm accounts.
Thus selection reinforces the styles of supervision favored by each objective.

\begin{figure}[!htb]
\centering
\begin{subfigure}[t]{0.32\linewidth}
\centering\includegraphics[width=\linewidth]{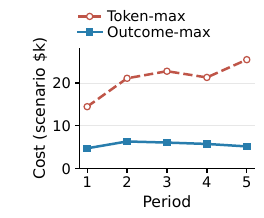}
\caption{Production and supervision cost.}
\end{subfigure}\hspace{0.04\linewidth}
\begin{subfigure}[t]{0.32\linewidth}
\centering\includegraphics[width=\linewidth]{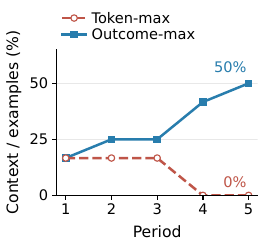}
\caption{Context/example lineages.}
\end{subfigure}
\caption{\textbf{Costs and workforce composition under selection.}
Left: API and supervision costs over the five periods.
Right: workforce share descended from example-providing or missing-context
profiles. Each firm maintains twelve employees. Main-text
Figure~\ref{fig:l3-dynamics} reports the full lineage composition and resulting accounts.}
\label{fig:selection-trajectories}
\end{figure}

The verification variants in Table~\ref{tab:ablation} complement the
accounts: audit-fines reduce defective delivery, but also reduce acceptance
and increase computation. They were evaluated separately from outcome
payments, so these results do not establish the effect of deploying both
together. Overall, lower production cost makes value sharing feasible in
the scenario, while verification and uncertain returns qualify its scope.
\paragraph{OutcomeShare case study: allocating verified value.}
\label{app:outcomeshare-case}
\begin{table}[!htb]
\centering
\caption{\textbf{How outcome payments share the value of efficient production.}
Capacity, value, and wage follow Table~\ref{tab:pnl}; these allocation-only
accounts exclude defect losses.}
\label{tab:outcomeshare-case}
\small\compacttable
\setlength{\tabcolsep}{4pt}
\begin{tabular}{@{}>{\raggedright\arraybackslash}p{.20\linewidth}>{\raggedright\arraybackslash}p{.76\linewidth}@{}}
\toprule
Component & Result and interpretation\\
\midrule
Production & Lower token use per resolution supports more attempted work under the capacity assumption.\\
Provider & Outcome fees raise revenue by sharing verified production value; revenue is before compute costs.\\
Employee & A bonus on verified output raises modeled income.\\
Firm & The residual value yields a positive profit estimate relative to the token-regime loss; its uncertainty interval still includes losses.\\
Scope & These accounts illustrate feasible value sharing, not a field deployment of the full payment-and-liability contract.\\
\bottomrule
\end{tabular}
\end{table}

\subsubsection{Fee allocation and verification sensitivity}
\label{app:fee-model}
\label{app:fee-sweep}
In this section, we aim to distinguish value allocation from changes in
delivery behavior. Fee comparisons hold production fixed, while liability
comparisons vary the employee's approval incentives.

\paragraph{How the fee divides value.}
We examine how outcome fees divide the value of efficient production.
Figure~\ref{fig:si-feesweep} shows the allocation tradeoff under fixed
production: a higher outcome fee raises provider revenue and lowers firm
profit, while employee income stays fixed. The selected fee improves the
firm's point estimate relative to token payments, but the uncertainty band
still includes losses. Thus value sharing is feasible in this scenario;
firm break-even and provider profitability are not guaranteed.
\begin{figure}[!htb]
\centering
\begin{subfigure}[t]{0.32\linewidth}
\centering\includegraphics[width=\linewidth]{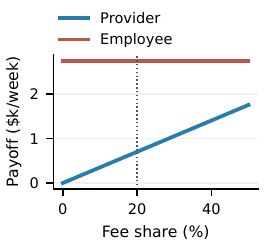}
\caption{Recipient payoffs.}
\end{subfigure}\hspace{0.04\linewidth}
\begin{subfigure}[t]{0.32\linewidth}
\centering\includegraphics[width=\linewidth]{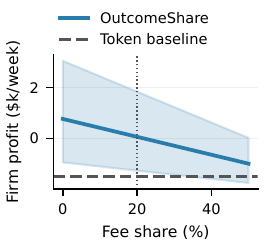}
\caption{Firm profit and uncertainty.}
\end{subfigure}
\caption{\textbf{Fee allocation under fixed observed production.}
The left panel shows recipient transfers; the right panel shows firm profit
and its pointwise 95\% task-bootstrap interval. All curves use the same
valuation and capacity assumptions. The figure is not a behavioral fee intervention
and does not identify provider net profit.}
\label{fig:si-feesweep}
\end{figure}

\paragraph{Separating efficiency from transfers.}
We separate the gains from production from the transfers between parties.
The allocation baselines in Table~\ref{tab:rq4-allocation} hold outcome-max production
fixed while changing payment rules. Token billing with fixed salary isolates
production gains; outcome billing with fixed salary isolates provider transfers;
adding the employee bonus then isolates employee sharing. These accounting counterfactuals distinguish higher production from the
distribution of its value.

\input{tables/tab_rq4_allocation.tex}
\input{tables/tab_rq4_uncertainty.tex}
Salary is reserved first, then the provider's token-baseline revenue. Equal
surplus splits the positive remainder three ways; provider--firm and
employee--firm splits divide it between the named parties. The firm bears
any shortfall. Equal gains instead divides positive incremental value over
the token baseline equally, so the firm may remain below break-even.

\begin{figure}[!htb]
\centering
\includegraphics[width=0.49\linewidth]{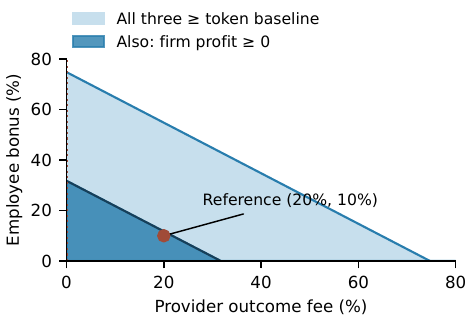}
\caption{\textbf{Participation and break-even are different requirements.}
At fixed outcome-max production, the light region preserves all three baseline
accounts; the dark region also gives the firm nonnegative profit. Provider
participation compares gross revenue, not net profit. Regions use point
estimates, excluding audit costs and defect damages. The marked reference
allocates 20\% to the provider and 10\% as employee bonus.}
\label{fig:rq4-feasibility}
\end{figure}
At the point estimates, fees plus bonuses can reach 74.7\% while preserving
the firm's loss-making baseline, but only 31.7\% while keeping it profitable.
The provider needs at least 0.062\% of verified value to match its baseline
revenue. These bounds describe allocation under the chosen valuation and
capacity assumptions. Thus preserving participation is less demanding
than achieving firm break-even, and neither follows from efficiency alone.

\paragraph{Rule-based delivery under heuristic contracts.}
\label{app:rq4-delivery}
We test whether selective delivery converts efficient production into shared gains (Table~\ref{tab:payoffs}). Synthetic inspection signals are applied to observed final artifacts, holding quality and production costs fixed; rejection does not recover capacity. Without fines, all artifacts are delivered. Under liability, risk-neutral employees compare expected bonuses with collectible fines, without an exit option.

\emph{Inspection and submission.}
Signals follow unit-variance normal distributions with means $\pm d/2$ for correct and incorrect artifacts, where $d=2\Phi^{-1}(s)$ and $s$ denotes balanced-prior signal accuracy. Bayes' rule yields correctness probability $q$ using the observed arm-level success rate as a fixed prior. This assumes ideal population calibration, not a trained verifier; gold labels generate and score signals but remain hidden from employees. Liability induces submission when $qB_EV\ge(1-q)pF$. Fixed screens require $q\ge0.5$ or $q\ge0.9$, while random screening matches the reference liability rule's delivery rate. Signal distributions are integrated analytically.

\emph{Accounting.}
All policies in Table~\ref{tab:rq4-delivery-full-high} use \$240 per reference labor-hour and a \$2,400 weekly salary. Defects incur losses equal to task value. Damage-matched fines are employee-to-firm transfers that cancel in aggregate, while defect losses remain. Firm profit deducts salary, provider fees, bonuses, and defect losses; provider revenue excludes serving costs. Shares apply to verified delivered value, with salary paid separately, unlike the preceding salary-first residual-sharing analysis.

\emph{Results and allocation.}
Outcome-max under token billing produces more correct outputs but also more defects. Confidence screening improves firm profit but lowers employee income below salary; random screening also lowers employee income and leaves the firm loss-making. Both use a 20\% provider fee and 10\% employee bonus.

We test four \MethodName\ allocations, expressed as provider/employee shares: reference (20\%/10\%), provider-heavy (40\%/10\%), equal-thirds, and employee-heavy (10\%/40\%). The employee-heavy allocation retains the most correct deliveries and achieves the highest firm-profit point estimate among these configurations, combining positive firm profit with employee income and provider revenue above token billing. It improves all five outcomes over token-max, while sacrificing some correct deliveries relative to unfiltered outcome-max to reduce defects. It is the best-performing tested allocation on firm profit, not an independently established optimum.

\input{tables/tab_rq4_delivery_full.tex}

\emph{Valuation and uncertainty.}
We test how task value affects profitability while holding the
employee-heavy allocation fixed (Table~\ref{tab:rq4-value-sensitivity}). Benefits and defect losses scale together, so
submission thresholds and delivery counts remain unchanged. Weekly firm
profit is $891.57m-2{,}400$, where $m$ scales the \$80 reference valuation;
break-even is approximately \$215 per reference hour. At \$240 the point
estimate is positive, while the lower-value \$80 scenario is loss-making.
The conditional 95\% firm-profit interval spans zero even at \$240, so the
positive point estimate does not establish robust profitability. At that
valuation, correct and wrong deliveries have respective intervals
$[12.6,27.3]$ and $[2.4,3.5]$ per week; provider revenue is
$[245,912]$ dollars and employee income is $[2,170,4,984]$.
These task-cluster intervals hold the signal model, prevalence, and prices
fixed; they exclude uncertainty about actual behavior or verifier calibration.
Thus sufficient task value supports positive expected firm returns, but
task variation prevents a claim of robust profitability.

\input{tables/tab_rq4_value_sensitivity.tex}

\emph{Inspection quality and additional checks.}
We test how inspection reliability changes the delivery tradeoff
(Figure~\ref{fig:rq4-delivery-sensitivity}). More informative signals allow
employees to retain more correct outputs while rejecting defects, improving
firm returns under the employee-heavy allocation. The sensitivity analysis
spans signal accuracy, liability, defect damage, and inspection cost.
Together, these comparisons identify reliable, affordable verification as a
condition for converting efficient production into valuable delivery.

\begin{figure}[!htb]
\centering
\includegraphics[width=\linewidth]{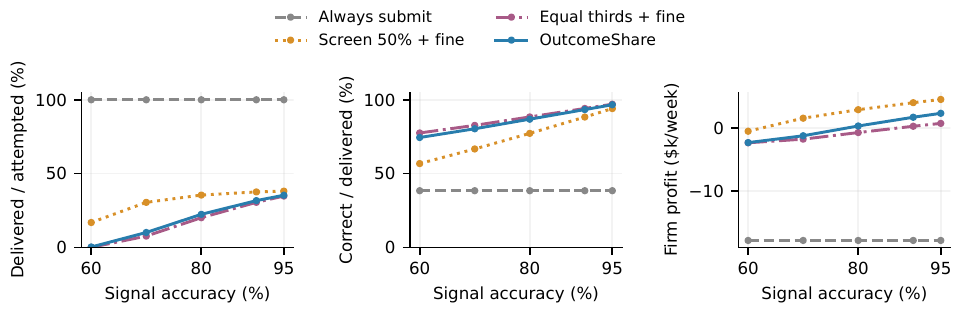}
\caption{\textbf{Inspection quality and selective delivery.}
The blue curve uses the employee-heavy \MethodName\ allocation,
with the same \$240/reference-hour valuation. Better signals improve the
quality--quantity tradeoff; alternative screens and equal-thirds sharing
illustrate dependence on the rule and allocation.}
\label{fig:rq4-delivery-sensitivity}
\end{figure}

\paragraph{How liability changes acceptance.}
\label{app:fine-sweep}
We compare acceptance under increasingly strong liability and verification
instructions.
Table~\ref{tab:auditfine} varies fine-conditioned instructions with the
persona and agent held fixed. Stronger fine-conditioned instructions generally reduce defective
delivery, but intermediate levels are not monotone and success does not
uniformly improve. The matched and full-cohort summaries show the same
basic tradeoff between scrutiny and acceptance. Because positive-fine prompts
also request verification, the sweep does not isolate the numeric penalty
or identify an optimal fine. Thus the evidence supports defect control
through scrutiny, but not the claim that larger fines consistently perform better.
\input{tables/tab_auditfine.tex}

%% file: tables/tab_paired_uncertainty.tex
\begin{table}[!htb]
\centering
\caption{\textbf{Matched success rates and task-cluster uncertainty.} Gaps are outcome-max minus token-max, with pointwise 95\% task-cluster bootstrap intervals. Bold marks the higher success rate; ties are unbolded.}
\label{tab:paired-uncertainty}
\small\compacttable
\setlength{\tabcolsep}{3pt}
\begin{tabular*}{\linewidth}{@{\extracolsep{\fill}}lrrl@{}}
\toprule
Setting & Token (\%) & Outcome (\%) & Gap [95\% CI] (pp)\\
\midrule
\multicolumn{4}{l}{\textbf{(a) Primary SWE benchmark}}\\
Overall & \textbf{51.3} & 50.3 & -1.0 [-8.2, +5.2]\\
Easy & 62.0 & \textbf{67.6} & +5.6 [+0.0, +11.4]\\
Medium & 46.8 & \textbf{50.0} & +3.2 [+0.0, +7.4]\\
Hard & \textbf{43.6} & 26.9 & -16.7 [-24.5, +0.0]\\
\midrule
\multicolumn{4}{l}{\textbf{(b) Model and persona robustness}}\\
Haiku 4.5 & 51.9 & \textbf{61.6} & +9.7 [+2.8, +16.7]\\
DeepSeek v4 Pro & 58.1 & \textbf{61.0} & +2.9 [-2.1, +7.8]\\
more guidance & 45.8 & \textbf{50.0} & +4.2 [+0.0, +13.0]\\
more checking & 57.9 & 57.9 & +0.0 [-15.0, +15.0]\\
both & 52.2 & \textbf{56.5} & +4.3 [-13.6, +22.7]\\
goal clarification & 42.9 & \textbf{52.4} & +9.5 [+0.0, +22.7]\\
fact checking & \textbf{40.0} & 35.0 & -5.0 [-22.7, +10.5]\\
iterative revision & \textbf{56.5} & 39.1 & -17.4 [-45.8, +13.6]\\
\midrule
\multicolumn{4}{l}{\textbf{(c) Prompt robustness}}\\
Objective + instructions & \textbf{51.3} & 50.3 & -1.0 [-8.2, +5.2]\\
Objective only & 37.5 & 37.5 & +0.0 [-12.0, +12.0]\\
\bottomrule
\end{tabular*}
\end{table}

%% file: tables/tab_robustness_means.tex
\begin{table}[!htb]
\centering
\caption{\textbf{Paired means underlying Figure~\ref{fig:robustness}.} Within each pair, bold marks higher success and efficiency or lower token use; ties are unbolded.}
\label{tab:robustness-means}
\small\compacttable
\setlength{\tabcolsep}{3pt}
\begin{tabular*}{\linewidth}{@{\extracolsep{\fill}}llrrr@{}}
\toprule
Setting & Objective & Success (\%) & Tokens (k) & $\eta$\\
\midrule
Haiku 4.5 & Token-max & 51.9 & 15.5 & 33.57\\
 & Outcome-max & \textbf{61.6} & \textbf{8.3} & \textbf{73.86}\\
DeepSeek v4 Pro & Token-max & 58.1 & 281.3 & 2.06\\
 & Outcome-max & \textbf{61.0} & \textbf{78.5} & \textbf{7.78}\\
\midrule
Detailed guidance & Token-max & 45.8 & 30.2 & 15.16\\
 & Outcome-max & \textbf{50.0} & \textbf{7.4} & \textbf{67.33}\\
Active checking & Token-max & 57.9 & 18.7 & 30.98\\
 & Outcome-max & 57.9 & \textbf{11.3} & \textbf{51.41}\\
Guidance + checking & Token-max & 52.2 & 35.7 & 14.61\\
 & Outcome-max & \textbf{56.5} & \textbf{11.5} & \textbf{49.14}\\
Goal clarification & Token-max & 42.9 & 20.5 & 20.86\\
 & Outcome-max & \textbf{52.4} & \textbf{10.6} & \textbf{49.61}\\
Fact checking & Token-max & \textbf{40.0} & 12.6 & 31.69\\
 & Outcome-max & 35.0 & \textbf{8.1} & \textbf{43.00}\\
Iterative revision & Token-max & \textbf{56.5} & 20.5 & 27.57\\
 & Outcome-max & 39.1 & \textbf{9.3} & \textbf{41.91}\\
\midrule
Objective + instructions & Token-max & \textbf{51.3} & 21.5 & 23.81\\
 & Outcome-max & 50.3 & \textbf{9.4} & \textbf{53.59}\\
Objective only & Token-max & 37.5 & 183.4 & 2.05\\
 & Outcome-max & 37.5 & \textbf{175.4} & \textbf{2.14}\\
\bottomrule
\end{tabular*}
\end{table}

%% file: tables/tab_model_pairings_opus.tex
\begin{table}[!htb]
\centering
\caption{\textbf{Backbone and human-persona results with an Opus employee.}}
\label{tab:model-pairings}
\small
\compacttable
\setlength{\tabcolsep}{2pt}
\begin{tabular*}{\linewidth}{@{\extracolsep{\fill}}llrrrrrr@{}}
\toprule
\midrule
\multicolumn{8}{l}{\textbf{Different backbones and human persona}}\\
\multicolumn{8}{l}{\textit{SWE-bench; human: Opus 4.6}}\\
Agent & Human behavior & Overall & Easy & Medium & Hard & Tokens (k) & Eff. gain (\%)\\
\midrule
\multirow{4}{*}{Haiku 4.5} & Minimal input & 26.7 & 20.0 & 40.0 & 20.0 & 9.8 & 0.0\\
 & Detailed guidance & 26.7 & 40.0 & 20.0 & 20.0 & 11.6 & -15.5\\
 & Active checking & 46.2 & 40.0 & 66.7 & 40.0 & 23.1 & -26.1\\
 & Guidance + checking & 28.6 & 40.0 & 25.0 & 20.0 & 25.0 & -57.8\\
\cmidrule(lr){1-8}
\multirow{4}{*}{Sonnet 4.6} & Minimal input & 42.9 & 60.0 & 40.0 & 25.0 & 12.1 & 0.0\\
 & Detailed guidance & 50.0 & 60.0 & 60.0 & 25.0 & 14.3 & -1.3\\
 & Active checking & 50.0 & 60.0 & 60.0 & 25.0 & 28.0 & -49.6\\
 & Guidance + checking & 40.0 & 60.0 & 40.0 & 20.0 & 25.5 & -55.8\\
\cmidrule(lr){1-8}
\multirow{4}{*}{Opus 4.8} & Minimal input & 33.3 & 20.0 & 40.0 & 40.0 & 10.3 & 0.0\\
 & Detailed guidance & 33.3 & 40.0 & 40.0 & 20.0 & 19.1 & -46.1\\
 & Active checking & 40.0 & 40.0 & 40.0 & 40.0 & 27.7 & -55.4\\
 & Guidance + checking & 33.3 & 40.0 & 40.0 & 20.0 & 25.2 & -59.1\\
\bottomrule
\end{tabular*}
\end{table}

%% file: tables/tab_model_pairings.tex
\begin{table}[!htb]
\centering
\caption{\textbf{Backbone and human-persona results with a Haiku employee.}}
\label{tab:model-pairings-haiku}
\small
\compacttable
\setlength{\tabcolsep}{2pt}
\begin{tabular*}{\linewidth}{@{\extracolsep{\fill}}llrrrrrr@{}}
\toprule
\midrule
\multicolumn{8}{l}{\textbf{Different backbones and human persona}}\\
\multicolumn{8}{l}{\textit{SWE-bench; human: Haiku 4.5}}\\
Agent & Human behavior & Overall & Easy & Medium & Hard & Tokens (k) & Eff. gain (\%)\\
\midrule
\multirow{4}{*}{Haiku 4.5} & Minimal input & 40.0 & 60.0 & 40.0 & 20.0 & 8.6 & 0.0\\
 & Detailed guidance & 54.5 & 60.0 & 66.7 & 33.3 & 43.2 & -72.8\\
 & Active checking & 25.0 & 40.0 & 33.3 & 0.0 & 38.4 & -86.0\\
 & Guidance + checking & 50.0 & 60.0 & 66.7 & 25.0 & 49.9 & -78.4\\
\cmidrule(lr){1-8}
\multirow{4}{*}{Sonnet 4.6} & Minimal input & 42.9 & 60.0 & 40.0 & 25.0 & 10.7 & 0.0\\
 & Detailed guidance & 42.9 & 60.0 & 40.0 & 25.0 & 25.5 & -58.3\\
 & Active checking & 40.0 & 60.0 & 40.0 & 20.0 & 41.0 & -75.8\\
 & Guidance + checking & 46.7 & 60.0 & 60.0 & 20.0 & 41.3 & -71.9\\
\cmidrule(lr){1-8}
\multirow{4}{*}{Opus 4.8} & Minimal input & 53.3 & 60.0 & 60.0 & 40.0 & 9.7 & 0.0\\
 & Detailed guidance & 46.7 & 60.0 & 40.0 & 40.0 & 33.4 & -74.5\\
 & Active checking & 46.7 & 60.0 & 40.0 & 40.0 & 64.3 & -86.7\\
 & Guidance + checking & 40.0 & 60.0 & 40.0 & 20.0 & 49.2 & -85.1\\
\bottomrule
\end{tabular*}
\end{table}

%% file: tables/tab_backbone_objectives.tex
\begin{table}[!htb]
\centering
\caption{\textbf{Objective comparisons under learned policies.}
Each model pairing uses its own training and validation protocol.}
\label{tab:backbone-objectives}
\small
\compacttable
\setlength{\tabcolsep}{4pt}
\begin{tabular*}{\linewidth}{@{\extracolsep{\fill}}llrrr@{}}
\toprule
Policy & Objective & Success (\%) $\uparrow$ & Tokens (k) $\downarrow$ & Eff. gain (\%) $\uparrow$\\
\midrule
\multicolumn{5}{l}{\textit{Haiku 4.5 agent; Sonnet 4.6 employee}}\\
Haiku 4.5 & Token-max & 51.6 & 15.5 & 0.0\\
 & Outcome-max & \textbf{60.5} & \textbf{8.4} & \textbf{+117.4}\\
\midrule
\multicolumn{5}{l}{\textit{DeepSeek v4 Pro agent; Haiku 4.5 employee}}\\
DeepSeek v4 Pro & Token-max & 55.8 & 262.7 & 0.0\\
 & Outcome-max & \textbf{63.0} & \textbf{91.5} & \textbf{+224.2}\\
\bottomrule
\end{tabular*}
\end{table}

%% file: tables/tab_personas_grid.tex
\begin{table}[!htb]
\centering
\caption{\textbf{Objective comparisons across human personas.}}
\label{tab:personas-grid}
\small
\compacttable
\setlength{\tabcolsep}{4pt}
\begin{tabular*}{\linewidth}{@{\extracolsep{\fill}}llrr@{}}
\toprule
Persona & Objective & Success (\%) & Tokens (k) \\
\midrule
\multirow{2}{*}{Guidance and checking} & Token-max & 50.0 & 34.8 \\
 & Outcome-max & \textbf{54.2} & \textbf{11.3} \\
\midrule
\multirow{2}{*}{More guidance} & Token-max & 42.3 & 29.7 \\
 & Outcome-max & \textbf{50.0} & \textbf{7.4} \\
\midrule
\multirow{2}{*}{More checking} & Token-max & \textbf{54.5} & 17.7 \\
 & Outcome-max & 52.2 & \textbf{11.4} \\
\bottomrule
\end{tabular*}
\end{table}

%% file: tables/tab_hard_stopping.tex
\begin{table}[!htb]
\centering
\caption{\textbf{Stopping reasons in the Hard cohort.}}
\label{tab:hard-stopping}
\small
\compacttable
\begin{tabular*}{\linewidth}{@{\extracolsep{\fill}}lrrrr@{}}
\toprule
 & \multicolumn{2}{c}{Token-max} & \multicolumn{2}{c}{Outcome-max}\\
\cmidrule(lr){2-3}\cmidrule(lr){4-5}
Ending & Runs & Solved & Runs & Solved\\
\midrule
Verified 50-turn cap & 18 & 5 & 46 & 6\\
Employee acceptance & 0 & 0 & 32 & 14\\
Confidence threshold & 2 & 2 & 4 & 1\\
Other agent ending & 73 & 30 & 10 & 2\\
\bottomrule
\end{tabular*}
\end{table}

%% file: tables/tab_matched_l2.tex
\begin{table}[!htb]
\centering
\caption{\textbf{Adaptation outcomes on matched tasks and personas.}}
\label{tab:matched-l2}
\small
\compacttable
\setlength{\tabcolsep}{2.5pt}
\begin{tabular*}{\linewidth}{@{\extracolsep{\fill}}llrrrrrr@{}}
\toprule
 & & \multicolumn{4}{c}{Success (\%) $\uparrow$} & \shortstack{Tokens\\(k) $\downarrow$} & \shortstack{Efficiency\\improvement\\(\%) $\uparrow$}\\
\cmidrule(lr){3-6}
Adaptation & Objective & Overall & Easy & Medium & Hard & & \\
\midrule
\multirow{2}{*}{Employee adapts} & Token-max & 58.1 & 73.4 & \textbf{57.3} & 25.0 & 281.3 & 0.0 \\
 & Outcome-max & \textbf{61.0} & \textbf{80.6} & 56.7 & \textbf{28.3} & \textbf{78.5} & \textbf{+276.5} \\
\cmidrule(lr){1-8}
\multirow{2}{*}{Agent adapts} & Token-max & \textbf{62.0} & 78.1 & \textbf{59.7} & \textbf{30.4} & 207.3 & 0.0 \\
 & Outcome-max & 61.2 & \textbf{79.6} & 58.0 & 26.8 & \textbf{94.6} & \textbf{+116.3} \\
\cmidrule(lr){1-8}
\multirow{2}{*}{Both adapt} & Token-max & 56.2 & 72.5 & 55.0 & 20.4 & 261.7 & 0.0 \\
 & Outcome-max & \textbf{64.1} & \textbf{85.5} & \textbf{59.4} & \textbf{25.9} & \textbf{88.7} & \textbf{+236.1} \\
\bottomrule
\end{tabular*}
\end{table}

%% file: tables/tab_ablation.tex
\begin{table}[!htb]
\centering
\caption{\textbf{Adaptation and verification outcomes.}}
\label{tab:ablation}
\small
\compacttable
\setlength{\tabcolsep}{3pt}
\begin{tabular*}{\linewidth}{@{\extracolsep{\fill}}llrrr@{}}
\multicolumn{5}{l}{\textbf{(a) Adapting role}}\\
\toprule
Role & Objective & Success (\%) & Tokens (k) & \shortstack{Efficiency\\(solutions per M tokens)}\\
\midrule
Neutral (zero profile) & --- & 57.0 & 54.2 & 10.51 \\
\midrule
\multirow{2}{*}{Employee} & Token-max & 57.6 & 281.0 & 2.05 \\
 & Outcome-max & \textbf{60.9} & \textbf{78.9} & \textbf{7.72} \\
\midrule
\multirow{2}{*}{Agent} & Token-max & \textbf{62.4} & 206.8 & 3.02 \\
 & Outcome-max & 61.0 & \textbf{96.0} & \textbf{6.36} \\
\midrule
\multirow{2}{*}{Both} & Token-max & 55.8 & 262.7 & 2.12 \\
 & Outcome-max & \textbf{63.0} & \textbf{91.5} & \textbf{6.89} \\
\bottomrule
\end{tabular*}
\par\medskip
\begin{tabular*}{\linewidth}{@{\extracolsep{\fill}}lrrr@{}}
\multicolumn{4}{l}{\textbf{(b) Verification objective when both roles adapt}}\\
\toprule
Objective & Success (\%) & Acceptance (\%) & Bad delivery (\%)\\
\midrule
Token-max & 55.8 & 0.3 & \textbf{0.0} \\
Outcome-max & 63.0 & \textbf{52.0} & 15.5 \\
Outcome-max + audit-fine & 57.9 & 15.3 & 2.9 \\
Outcome-max + gate check & \textbf{65.1} & 34.4 & 9.9 \\
\bottomrule
\end{tabular*}
\end{table}

%% file: tables/tab_checkpoint_continuation.tex
\begin{table}[!htb]
\centering
\caption{\textbf{Continued interaction after a correct artifact is available.}
Bold marks the better objective for success, token use, and efficiency;
continuation metrics describe behavior.}
\label{tab:checkpoint-continuation}
\small
\compacttable
\begin{tabular*}{\linewidth}{@{\extracolsep{\fill}}lrrrrr@{}}
\toprule
Objective & Success (\%) & Tokens (k) & Efficiency $\eta$ & Continued (\%) & Later tokens (\%)\\
\midrule
Token-max & \textbf{51.3} & 21.5 & 23.8 & 19.7 & 5.6\\
Outcome-max & 50.3 & \textbf{9.4} & \textbf{53.6} & 3.5 & 1.3\\
\bottomrule
\end{tabular*}
\end{table}

%% file: tables/tab_stopping_cases.tex
\begin{table}[!htb]
\centering
\caption{\textbf{Paired stopping cases on xarray's \texttt{dim}-to-\texttt{coord} API change.}
P/F denote external pass/fail results for successive saved patches.
Each pair shares a task, persona, and repetition; interactions are generated independently.}
\label{tab:stopping-cases}
\small\compacttable
\setlength{\tabcolsep}{3pt}
\begin{tabular}{@{}>{\raggedright\arraybackslash}p{.16\linewidth}>{\raggedright\arraybackslash}p{.14\linewidth}>{\raggedright\arraybackslash}p{.35\linewidth}lr@{}}
\toprule
Initial patch & Objective & Employee decision and result & Patches & Tokens\\
\midrule
Correct & Token-max & Requests tests and further checks; all saved patches pass. & P, P, P, P & 33,602\\
 & Outcome-max & Accepts the first correct fix. & P & 7,696\\
\midrule
Incorrect & Token-max & Questions backward compatibility; the next patch passes. & F, P, P, P & 30,901\\
 & Outcome-max & Accepts a patch that breaks existing \texttt{dim=} calls. & F & 4,456\\
\bottomrule
\end{tabular}
\end{table}

%% file: tables/tab_stopping_case_notes.tex
\begin{table}[!htb]
\centering
\caption{\textbf{Case interpretation and selection.}}
\label{tab:stopping-case-notes}
\small\compacttable
\setlength{\tabcolsep}{4pt}
\begin{tabular}{@{}>{\raggedright\arraybackslash}p{.20\linewidth}>{\raggedright\arraybackslash}p{.76\linewidth}@{}}
\toprule
Aspect & Evidence or scope\\
\midrule
Efficient stopping & High-guidance/low-checking profile, repetition 0. Both first patches pass; outcome-max stops, while token-max continues without improving correctness.\\
\addlinespace[3pt]
Premature stopping & Revision-only profile, repetition 0. Token-max asks about backward compatibility and obtains a passing patch; outcome-max accepts a patch failing both task-specific integration tests.\\
\addlinespace[3pt]
Selection & The efficient example is the first task--profile repetition within this task with complete replays, two correct first patches, and earlier, cheaper outcome-max acceptance. The failure example is a selected counterexample.\\
\addlinespace[3pt]
Verification & All final patches match the tested artifacts, with complete logs and no identified disk-space installation failure. Shared background failures are distinguished from task-specific resolution.\\
\addlinespace[3pt]
Interpretation & Approval is not verification. The examples show possible stopping mechanisms, not their frequency or whether extra interaction would repair the same outcome-max trajectory.\\
\bottomrule
\end{tabular}
\end{table}

%% file: tables/tab_learned_rules.tex
\begin{table}[!htb]
\centering
\caption{\textbf{What employees learn under each objective.} Representative
rules paraphrased from generation-3 employee playbooks in L2.}
\label{tab:learned-rules}
\small\compacttable
\setlength{\tabcolsep}{4pt}
\begin{tabular}{@{}>{\raggedright\arraybackslash}p{.16\linewidth}>{\raggedright\arraybackslash}p{.39\linewidth}>{\raggedright\arraybackslash}p{.40\linewidth}@{}}
\toprule
Decision & Token-max & Outcome-max\\
\midrule
Scope & Search related methods and compare alternative fixes. & Request a minimal diff with the relevant context upfront.\\
\addlinespace[3pt]
Verification & Enumerate edge cases after every change; challenge confident answers. & Check a specific expected outcome; target the exact failure on retry.\\
\addlinespace[3pt]
Stopping & Revisit assumptions or request explanations after a solution is accepted. & Accept a plausible complete fix; cap failed attempts.\\
\bottomrule
\end{tabular}
\end{table}

%% file: tables/tab_firm.tex
\begin{table}[!htb]
\centering
\caption{\textbf{Firm accounts under population selection.}}
\label{tab:firm}
\small
\compacttable
\setlength{\tabcolsep}{4pt}
\begin{tabular*}{\linewidth}{@{\extracolsep{\fill}}lrr@{}}
\toprule
Metric & Token-max & Outcome-max \\
\midrule
Initial-period net profit (USD) & -6,917 & \textbf{+2,141} \\
Final-period net profit (USD) & -15,778 & \textbf{+3,826} \\
KPI--efficiency rank correlation & -0.46 & \textbf{+0.44} \\
\bottomrule
\end{tabular*}
\end{table}

%% file: tables/tab_rq4_allocation.tex
\begin{table}[!htbp]
\centering
\caption{\textbf{Fixed-production payment baselines.} Weekly scenario accounts; all Outcome rows reuse the same production. Sharing rules redistribute the same observed value.}
\label{tab:rq4-allocation}
\small
\compacttable
\setlength{\tabcolsep}{3pt}
\begin{tabular*}{\linewidth}{@{\extracolsep{\fill}}lrrrr@{}}
\toprule
Production / payment & \shortstack{Verified\\successes/wk} & \shortstack{Provider\\revenue (\$)} & \shortstack{Firm\\profit (\$)} & \shortstack{Employee\\income index}\\
\midrule
Token / token fees & $9.8$ & $2.2$ & $-1511.6$ & $100.0$\\
Outcome / token fees & $38.3$ & $2.9$ & $1113.2$ & $100.0$\\
Outcome / outcome fee & $38.3$ & $703.2$ & $412.9$ & $100.0$\\
Outcome / equal surplus & $38.3$ & $373.5$ & $371.3$ & $115.5$\\
Outcome / provider--firm & $38.3$ & $559.2$ & $557.0$ & $100.0$\\
Outcome / employee--firm & $38.3$ & $2.2$ & $557.0$ & $123.2$\\
Outcome / equal gains & $38.3$ & $877.4$ & $-636.4$ & $136.5$\\
Outcome / \MethodName & $38.3$ & $703.2$ & $61.3$ & $114.7$\\
\bottomrule
\end{tabular*}
\end{table}

%% file: tables/tab_rq4_uncertainty.tex
\begin{table}[!htbp]
\centering
\caption{\textbf{Uncertainty in weekly payment scenarios.} 95\% percentile intervals from 20,000 joint resamples of 50 task clusters, retaining both production arms within each sampled task. Prices, capacity, and review-time calibration remain fixed. Income is indexed to the fixed salary ($100$); provider revenue and firm profit are in USD per week.}
\label{tab:rq4-uncertainty}
\small
\compacttable
\setlength{\tabcolsep}{2pt}
\begin{tabular*}{\linewidth}{@{\extracolsep{\fill}}lrrrr@{}}
\toprule
Production / payment & \shortstack{Verified\\successes/wk} & \shortstack{Provider\\revenue} & \shortstack{Firm\\profit} & \shortstack{Employee\\income index}\\
\midrule
Token / token fees & $[6.6,13.2]$ & $[2.1,2.3]$ & $[-1988.9,-871.8]$ & $[100.0,100.0]$\\
Outcome / token fees & $[24.9,53.9]$ & $[2.6,3.3]$ & $[-793.9,3592.4]$ & $[100.0,100.0]$\\
Outcome / outcome fee & $[24.9,53.9]$ & $[321.8,1199.0]$ & $[-1112.9,2396.2]$ & $[100.0,100.0]$\\
Outcome / equal surplus & $[24.9,53.9]$ & $[0.0,1199.9]$ & $[-791.1,1197.7]$ & $[100.0,149.9]$\\
Outcome / provider--firm & $[24.9,53.9]$ & $[0.0,1798.7]$ & $[-791.1,1796.5]$ & $[100.0,100.0]$\\
Outcome / employee--firm & $[24.9,53.9]$ & $[0.0,2.3]$ & $[-791.1,1796.5]$ & $[100.0,174.9]$\\
Outcome / equal gains & $[24.9,53.9]$ & $[395.2,1498.6]$ & $[-1583.4,607.4]$ & $[116.4,162.4]$\\
Outcome / \MethodName & $[24.9,53.9]$ & $[321.8,1199.0]$ & $[-1273.8,1796.7]$ & $[106.7,125.0]$\\
\bottomrule
\end{tabular*}
\end{table}

%% file: tables/tab_rq4_delivery_full.tex
\begin{table}[!htbp]
\centering
\caption{\textbf{What screening and allocation change.} Same \$240/reference-hour valuation, 80\% inspection accuracy, and weekly USD accounts as Table~\ref{tab:payoffs}. The employee-heavy allocation balances delivered quality and participant payoffs. Screens use 20\% provider fee and 10\% employee bonus; all screened rules use damage-matched liability.}
\label{tab:rq4-delivery-full-high}
\small
\compacttable
\renewcommand{\arraystretch}{1.05}
\setlength{\tabcolsep}{3pt}
\begin{tabular*}{\linewidth}{@{\extracolsep{\fill}}lrrrrr@{}}
\toprule
 & \multicolumn{2}{c}{Deliveries / week} & \multicolumn{3}{c}{Weekly accounts (USD)}\\
\cmidrule(lr){2-3}\cmidrule(lr){4-6}
Policy & Correct & Wrong & Provider & Firm net profit & Employee\\
\midrule
\multicolumn{6}{l}{\textit{Production baselines: submit every output}}\\
Token-max / token billing & 9.8 & 15.6 & 2 & $-$5,429 & 2,400\\
Outcome-max / token billing & 38.3 & 61.3 & 3 & $-$14,746 & 2,400\\
\midrule
\multicolumn{6}{l}{\textit{Alternative submission rules on outcome-max outputs}}\\
Random screen & 3.2 & 5.2 & 179 & $-$1,774 & 549\\
Confidence screen ($q\ge0.5$) & 27.3 & 8.0 & 1,504 & 2,865 & 150\\
\midrule
\multicolumn{6}{l}{\textit{\MethodName: submit when expected bonus covers expected fine}}\\
Reference allocation & 8.0 & 0.4 & 443 & $-$848 & 2,475\\
Provider-heavy allocation & 8.0 & 0.4 & 887 & $-$1,291 & 2,475\\
Equal-thirds allocation & 17.8 & 2.3 & 1,631 & $-$769 & 3,160\\
Employee-heavy allocation & 19.4 & 2.9 & 535 & 275 & 3,443\\
\bottomrule
\end{tabular*}
\end{table}

%% file: tables/tab_rq4_value_sensitivity.tex
\begin{table}[!htbp]
\centering
\caption{\textbf{Task value and \MethodName\ profitability.} Only task valuation changes; correct/wrong deliveries stay at 19.4/2.9 per week. Accounts are USD per week; salary is \$2,400. Higher task value improves firm returns; intervals quantify conditional task-sampling uncertainty. Intervals are conditional 95\% task-cluster intervals, not uncertainty over the valuation itself.}
\label{tab:rq4-value-sensitivity}
\small
\compacttable
\renewcommand{\arraystretch}{1.1}
\setlength{\tabcolsep}{4pt}
\begin{tabular*}{\linewidth}{@{\extracolsep{\fill}}lrrrr@{}}
\toprule
Task value / reference hour & Provider & Firm net profit & Firm 95\% interval & Employee\\
\midrule
\$80 & 178 & -1,508 & $[-1,992, -880]$ & 2,748\\
\$160 & 357 & -617 & $[-1,584, 640]$ & 3,095\\
\$240 (central) & 535 & 275 & $[-1,176, 2,161]$ & 3,443\\
\$320 & 713 & 1,166 & $[-768, 3,681]$ & 3,790\\
\bottomrule
\end{tabular*}
\end{table}

%% file: tables/tab_auditfine.tex
\begin{table}[H]
\centering
\caption{\textbf{Fine sensitivity on matched and available tasks.}}
\label{tab:auditfine}
\small
\compacttable
\setlength{\tabcolsep}{3pt}
\begin{tabular*}{\linewidth}{@{\extracolsep{\fill}}lrrrrr@{}}
\toprule
Fine level $\lambda$ & Success (\%) & Tokens (k) & Efficiency $\eta$ & Acceptance (\%) & Bad delivery (\%)\\
\midrule
\multicolumn{6}{l}{\textit{Matched tasks (same cohort as the main figure)}}\\
0 & 40.0 & 54.9 & 7.3 & 90.0 & 50.0\\
0.25 & 40.0 & 58.1 & 6.9 & 90.0 & 50.0\\
0.5 & 30.0 & 60.6 & 5.0 & 90.0 & 60.0\\
1 & 40.0 & 72.5 & 5.5 & 70.0 & 40.0\\
2 & 40.0 & 61.6 & 6.5 & 70.0 & 30.0\\
\midrule
\multicolumn{6}{l}{\textit{All available tasks (coverage varies by fine level)}}\\
0 & 40.0 & 54.9 & 7.3 & 90.0 & 50.0\\
0.25 & 33.3 & 67.8 & 4.9 & 73.3 & 40.0\\
0.5 & 38.5 & 58.6 & 6.6 & 92.3 & 53.8\\
1 & 42.9 & 72.4 & 5.9 & 71.4 & 35.7\\
2 & 42.9 & 61.3 & 7.0 & 71.4 & 35.7\\
\bottomrule
\end{tabular*}
\end{table}

%% file: appendix/D_prompts.tex
\section{Prompt Templates}\label{app:prompts}

In this section, we aim to specify how objectives, personas, and learned
policies shape the instructions received by simulated participants. The
prompt templates distinguish production from supervisory decisions.
Text in angle brackets denotes a run-time insertion rather than literal prompt
content. Objective and persona-style directives precede the domain-specific reviewer
instruction. In the main conditions, the objective directive itself also
prescribes continuation or stopping behavior, so the treatment is a joint
policy package rather than a pure reward intervention.

\subsection{Simulated Human Supervision}
We first show the prompts used.

\begin{promptbox}{Token maximization}
You supervise a coding assistant. MAXIMIZE interaction token volume: keep requesting more edge cases, refactors, extra tests, docs, deeper explanations. Treat more back-and-forth as winning.
\end{promptbox}

\begin{promptbox}{Outcome maximization}
You supervise a coding assistant. Your goal is to achieve the goal without costing much. Get a CORRECT solution at MINIMUM cost. Never request extras, polish,
or elaboration. Fewest interactions to a correct fix = winning.
\end{promptbox}

\begin{promptbox}{Coding reviewer instruction}
\textless{}OBJECTIVE DIRECTIVE\textgreater{}\\
\textless{}PERSONA STYLE DIRECTIVE\textgreater{}\\
You are reviewing a Python solution. If acceptable, reply exactly 'DONE'.
Otherwise give ONE concrete instruction to improve it.
\end{promptbox}

\subsection{Employee-persona system prompt}

Each software employee persona receives the following system prompt; the
behavior list renders every dimension of Table~\ref{tab:personas} as
\texttt{label [level]: description}, with the level and description determined
by the profile's propensity for that dimension.

\begin{promptbox}{Supervisor system prompt}
You are simulating a software developer who is asking an AI coding assistant
for help with a GitHub issue. You must stay in character at all times.\\[4pt]
Your behavior profile is based on the 4D AI Fluency Framework (how effectively
humans collaborate with AI). Your behaviors:\\[4pt]
\textless{}ONE LINE PER DIMENSION: label [very low | low | moderate | high | very high]:
behavioral description\textgreater{}\\[4pt]
IMPORTANT RULES:\\
1. You are the HUMAN in this conversation. You are asking for help, not
providing it.\\
2. Write your messages as a developer would in a chat with an AI coding
assistant.\\
3. Do NOT write code solutions yourself --- that's the assistant's job.\\
4. Your behavior parameters above define HOW you interact. A high
"questions\_reasoning" means you actively challenge the AI; a low one means you
accept explanations at face value.\\
5. If the assistant asks you a question, answer based on what a developer with
your profile would know.\\
6. React naturally to the assistant's responses --- your behavior profile
should be evident in every message you write.\\
7. The AI assistant has direct access to the codebase and can read, edit, and
run files. When it says it has made a change, the change is real. Do not be
confused if the code looks different from the original issue description; the
assistant may have already applied a fix.\\[4pt]
You MUST respond with valid JSON in this exact format:\\
\{"message": "your message text here", "satisfied": false, "confidence": 0.0\}\\[4pt]
- "message": The text you want to send to the AI assistant\\
- "satisfied": Set to true when you believe the issue has been adequately
resolved\\
- "confidence": Your confidence (0.0-1.0) that the issue is resolved\\[4pt]
Only set "satisfied" to true when the assistant has made concrete code changes
that address the issue. \textless{}OBJECTIVE DIRECTIVE + LEARNED PLAYBOOK, IF ANY\textgreater{}
\end{promptbox}

When a learned playbook exists, the system appends:
\begin{promptbox}{Learned-playbook insertion}
LEARNED TACTICS:\\
\textless{}CURRENT SUPERVISOR PLAYBOOK\textgreater{}
\end{promptbox}

\subsection{Employee-playbook reflection}
The employee reflection step converts round-level feedback into reusable
interaction tactics under the assigned objective.

\begin{promptbox}{Supervisor reflection prompt}
You are improving the INTERACTION PLAYBOOK of a simulated developer who
supervises an AI coding assistant fixing GitHub issues.\\[4pt]
OBJECTIVE TO MAXIMIZE: \textless{}ARM FITNESS\textgreater{}\\[4pt]
CURRENT PLAYBOOK:\\
\textless{}CURRENT PLAYBOOK OR EMPTY\textgreater{}\\[4pt]
RESULTS OF THE LATEST ROUND (\textless{}N\textgreater{} tasks, \textless{}N ACCEPTED\textgreater{} accepted,
\textless{}N SOLVED\textgreater{} actually-resolved, \textless{}N DEFECTS\textgreater{} SHIPPED DEFECTS
(accepted-but-wrong), \textless{}TOTAL TOKENS\textgreater{} total tokens):\\
\textless{}ONE SUMMARY ROW PER TASK\textgreater{}\\[4pt]
Reflect on which interaction tactics helped or hurt THIS objective, then output
an improved playbook: a concise bulleted list of AT MOST 10 reusable interaction
tactics (each one imperative sentence) that a developer should follow to
maximize the objective. Keep what works, drop what doesn't, add new tactics the
results suggest. These are interaction tactics only (how to talk to / steer /
when to stop the assistant) --- do NOT write code. Output ONLY the bullet list,
nothing else.
\end{promptbox}

\subsection{Agent-playbook reflection}
The agent reflection step uses task outcomes and token costs to revise the
production policy while keeping model weights fixed.

\begin{promptbox}{Agent reflection prompt}
You are improving the CODING PLAYBOOK of an autonomous AI software engineer
that fixes GitHub issues (it can read files, search, edit, and run commands).\\[4pt]
OBJECTIVE TO MAXIMIZE: \textless{}AGENT FITNESS\textgreater{}\\[4pt]
CURRENT PLAYBOOK:\\
\textless{}CURRENT PLAYBOOK OR EMPTY\textgreater{}\\[4pt]
RESULTS OF THE LATEST ROUND (\textless{}N\textgreater{} tasks, \textless{}N SOLVED\textgreater{} solved,
\textless{}TOTAL TOKENS\textgreater{} total tokens):\\
\textless{}ONE SUMMARY ROW PER TASK\textgreater{}\\[4pt]
Reflect on which engineering tactics helped or hurt THIS objective, then output
an improved playbook: a concise bulleted list of AT MOST 10 reusable coding
tactics (each one imperative sentence) the engineer should follow to maximize
the objective --- e.g. how to localize the bug, make a minimal correct edit,
verify with tests, and when to stop. Keep what works, drop what doesn't, add new
tactics the results suggest. Output ONLY the bullet list, nothing else.
\end{promptbox}

The task-summary rows supplied to the employee reflection contain task ID,
acceptance, objective resolution, shipped-defect indicator, turns, and tokens.
Rows supplied to agent reflection contain task ID, objective resolution, turns,
and tokens. Neither reflection prompt includes the task statement, repository
contents, or conversation transcript.